\tikzset{>=latex} 
\tikzset{
	inplayer/.style={draw,minimum height=0.5cm,minimum width=4cm,anchor=center,rectangle,fill=white!20},  
	layer/.style={draw,minimum height=0.5cm,minimum width=4cm,anchor=center,rectangle,fill=blue!20},
	dropout/.style={draw,minimum height=0.5cm,minimum width=4cm,anchor=center,rectangle,fill=orange!20},
	norm/.style={draw,minimum height=0.5cm,minimum width=4cm,anchor=center,rectangle,fill=green!20},
	activation/.style={draw,minimum height=0.5cm,minimum width=4cm,anchor=center,rectangle,fill=red!20},
	output/.style={draw,minimum height=0.5cm,minimum width=4cm,anchor=center,rectangle,fill=purple!20},
	frame/.style={
		rectangle, draw,
		text width=6em, text centered,
		minimum height=4em,drop shadow,fill=white,
		rounded corners,
	},
	line/.style={
		draw, -{Latex},rounded corners=3mm,
	}, 
}
\colorlet{mydarkblue}{blue!30!black}
\pgfplotsset{compat=1.12} 
\def\N{50}
\newtheorem{definition}{Definition}[section]
\newtheorem{theorem}{Theorem}[section]
\newtheorem{corollary}{Corollary}[theorem]
\newcommand\disteq{\mathrel{\stackrel{\makebox[0pt]{\mbox{\normalfont\tiny D}}}{=}}}
\title{Stochastic Actor-Critic: Mitigating Overestimation via Temporal Aleatoric Uncertainty}
\author{ \href{https://orcid.org/0000-0002-1406-6874}{\includegraphics[scale=0.06]{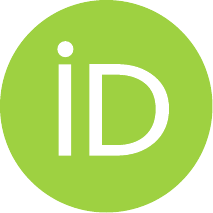}\hspace{1mm}Uğurcan Özalp} 	\\
	Turkish Aerospace \\
	Ankara, Türkiye \\
	\texttt{ugurcanozalp06@gmail.com} \\
	\texttt{ugurcan.ozalp2@tai.com.tr} \\	
}
\begin{document}
\maketitle

\maketitle

\begin{abstract}
	
	Off-policy actor-critic methods in reinforcement learning train a critic with temporal-difference updates and use it as a learning signal for the policy (actor). This design typically achieves higher sample efficiency than purely on-policy methods. However, critic networks tend to overestimate value estimates systematically. This is often addressed by introducing a pessimistic bias based on uncertainty estimates. 
	Current methods employ ensembling to quantify the critic's \textit{epistemic uncertainty}—uncertainty due to limited data and model ambiguity—to scale pessimistic updates.
	In this work, we propose a new algorithm called Stochastic Actor-Critic (STAC) that incorporates \textit{temporal (one-step) aleatoric uncertainty}—uncertainty arising from stochastic transitions, rewards, and policy-induced variability in Bellman targets—to scale pessimistic bias in temporal-difference updates, rather than relying on epistemic uncertainty.
	STAC uses a single distributional critic network to model the temporal return uncertainty, and applies dropout to both the critic and actor networks for regularization. 
	Our results show that pessimism based on a distributional critic alone suffices to mitigate overestimation, and naturally leads to risk-averse behavior in stochastic environments. Introducing dropout further improves training stability and performance by means of regularization. With this design, STAC achieves improved computational efficiency using a single distributional critic network. 
	
\end{abstract}

\keywords{Reinforcement Learning \and Uncertainty \and Overestimation \and Pessimism \and Actor-Critic \and Dropout}


\section{Introduction}

Actor-critic methods leverage off-policy samples to train critics, promising higher sample-efficient learning than on-policy algorithms. Despite this advantage, actor-critic agents often struggle with \emph{overestimation bias} in value function learning due to the joint effects of function approximation, temporal-difference learning, and off-policy sampling \citep{thrun2014issues, sutton2018reinforcement, van2018deep}, which can destabilize training. In this work, we revisit this problem from a fresh perspective, focusing on the use of uncertainty modeling in actor-critic architectures. 

Epistemic uncertainty indicates a lack of training data and is used by current actor-critic methods to scale pessimistic updates \citep{fujimoto2018addressing, haarnoja2018soft, kuznetsov2020controlling, moskovitz2021tactical, chen2021randomized, hiraoka2021dropout, zhang2024explorer}. These methods use critic ensembles for this kind of uncertainty modeling. However, pessimistic updates based on epistemic uncertainty in the critic may hinder exploration of the state-action space, contradicting the principle of \textit{optimism in the face of uncertainty} \citep{kocsis2006bandit, audibert2007tuning, azizzadenesheli2018efficient, ciosek2019better, o2023efficient, wu2023uncertainty}.

On the other hand, aleatoric uncertainty is a measure of noise within training data. Some works propose to use pessimistic updates based on both epistemic and aleatoric uncertainty of return (discounted cumulative reward) for overestimation mitigation \citep{kuznetsov2020controlling}. Originally, modeling aleatoric uncertainty is a topic of distributional reinforcement learning \citep{bellemare2017distributional, dabney2018distributional, kim2022efficient, duan2021distributional, duan2025distributional, ma2025dsac}, and mainly used to scale risk sensitivity of the agent \citep{tang2019worst, yang2021wcsac, theate2023risk, ma2025dsac}. While pessimistic policy updates lead to risk-averse behavior, optimistic updates result in risk-seeking behavior. Recently, in Q-learning setting, \citet{achab2023one} proposed to model temporal aleatoric distribution of return, in which uncertainty is induced by only one-step dynamics of the environment, rather than across all steps. This approach is proven to have convergence guarantees theoretically. 

In off-policy actor-critic methods, the temporal-difference target changes continuously as the policy is updated. Although policy improvement is deterministic given a critic, high-dimensional function approximation causes small critic errors to be selectively exploited in a non-stationary manner, inducing irreducible variability in value targets even in deterministic environments, which is a major source of overestimation. We argue that this variability is best captured as temporal aleatoric uncertainty in the critic output, as it aggregates all uncertainty sources contributing to overestimation: policy-induced effects and transition or reward stochasticity.

In this work, we show that temporal aleatoric uncertainty can be modeled by a distributional critic, and that pessimism applied to this uncertainty alone is sufficient to control overestimation without ensemble of critics. Based on this idea, we introduce a novel off-policy distributional actor-critic algorithm, Stochastic Actor-Critic (STAC), specifically modeling temporal (one-step) return uncertainty for critic, and pessimistic updates based on this distribution. This way, STAC eliminates the need for double/ensemble critics, reducing both computation and memory costs. In other words, STAC uses \textit{pessimism in the face of temporal aleatoric uncertainty} for overestimation mitigation.  

Our method differs from other methods using aleatoric uncertainty for risk aversion \citet{tang2019worst, yang2021wcsac, kim2022efficient, ma2025dsac}. These methods, incorporate pessimism on total return uncertainty only for actor updates, whereas STAC uses pessimism on temporal return uncertainty for both critic and actor updates. The main purpose is overestimation mitigation rather than risk-averse learning, but pessimism in STAC still yields to risk aversion, similar to fully distributional methods. 

Recent work of \citet{nauman2024overestimation} has also shown that network regularization has positive impact on overestimation by reducing overfitting. Ensemble based methods inherently regularized by using multiple critic networks. Since STAC uses a single critic network, it is sensitive to learning errors and overfitting. For this purpose, dropout \citep{srivastava2014dropout} and layer normalization \citep{ba2016layer} are employed in both actor and critic networks as regularization tools.

The implementation is very simple and can be obtained by introducing dropout to networks, modeling a distributional critic and defining a pessimistic learning objective upon Soft Actor-Critic algorithm \citep{haarnoja2018soft}. We conduct experiments on standard RL benchmarks to evaluate the performance of STAC compared to existing methods. Our results demonstrate the effectiveness of STAC in achieving competitive performance with state-of-the-art methods while requiring fewer computational resources (single critic), making it a promising approach for real-world RL applications. 

\section{Preliminaries}

\label{sec:preliminaries}

This section introduces the minimum background required to analyze uncertainty-induced overestimation in off-policy actor-critic methods. Throughout the paper, $\mathcal{P}(\Omega)$ denotes the set of all possible probability distributions on the set $\Omega$. $A \disteq B $ indicates that two random variables, $A$ and $B$, have identical probability laws. 

\subsection{Aleatoric and Epistemic Uncertainty}

Uncertainty in an estimate is mainly categorized as either \textit{aleatoric} or \textit{epistemic}  \citep{der2009aleatory, kendall2017uncertainties, gal2016uncertainty}. Epistemic uncertainty refers to uncertainty of model parameters ($\theta$) due to insufficient training data. Given the training data $\mathcal{D}$ and prior distribution $p(\theta)$ over the network parameters $\theta$, we can compute the posterior distribution over the parameters $p(\theta \mid \mathcal{D})$ using Bayesian inference. 

On the other hand, aleatoric uncertainty is induced by the inherent randomness within the data. Even with infinite data, it is unavoidable and cannot be reduced, because it is an intrinsic part of the process being modeled. In deep learning, we can model this by having a distributional network that outputs a probability distribution $p_{\theta}(y|x)$ conditioned on input $x$ \citep{lakshminarayanan2017simple, kendall2017uncertainties}. Given a dataset $\mathcal{D}$, the loss function for training the network can be derived from the negative log-likelihood: $\mathcal{L}_{\theta}(\mathcal{D}) = \mathbb{E}_{\{(x_i, y_i)\} \sim \mathcal{D}} [ -\log{p_{\theta}(y_i|x_i)} ] $. In reinforcement learning, aleatoric uncertainty naturally arises from stochastic rewards, transition dynamics, and policy-induced randomness in temporal-difference targets. 

\subsection{Maximum Entropy Actor-Critic Reinforcement Learning}

In reinforcement learning language, the agent lives in a Markov Decision Process (MDP) which is represented by a tuple $\mathcal{M} = (\mathcal{S}, \mathcal{A}, d_0, \tau, R)$, where $\mathcal{S}$ is the state space, $\mathcal{A}$ is the action space, $d_0 \in \mathcal{P}(\mathcal{S})$ is the initial state distribution, $\tau: \mathcal{S}\times\mathcal{A} \rightarrow \mathcal{P}(\mathcal{S})$ is the transition kernel and $R: \mathcal{S}\times\mathcal{A} \rightarrow \mathbb{R}$ is the reward function. 

The initial state is sampled first, $s_0 \sim d_0(\cdot)$. At time step $t$, being in state $s_t$; next state is obtained from the environment, $s_{t+1} \sim \tau(\cdot \mid s_t,a_t)$ depending on the taken action $a_t \sim \pi(\cdot \mid s_t)$. Finally, a reward is obtained, $r_t=R(s_t, a_t)$ from the reward function $R$. State-return and state-action return are defined respectively as follows,

\begin{equation}
	\label{eqn:value_defn}
	G^{\pi}(s) \disteq \sum_{t=0}^{\infty} \gamma^{t} \big( R(s_t, a_t) - \alpha \log \pi(a_t|s_t) \big), \quad s_0=s, 
\end{equation}

\begin{equation}
	\label{eqn:maxent_zreturn_defn}
	Z^{\pi}(s, a) \disteq R(s, a) + \sum_{t=1}^{\infty} \gamma^{t} \big( R(s_t, a_t) - \alpha \log \pi(a_t|s_t) \big) , \quad s_0=s, \quad  a_0=a,  
\end{equation}

The ultimate goal of the agent is to derive a policy $\pi: \mathcal{S} \rightarrow \mathcal{P}(\mathcal{A})$ to maximize discounted return $G^{\pi}$ with entropy bonus \citep{haarnoja2017reinforcement, haarnoja2018soft}. However, returns are random variables, and cannot be used as objective. Therefore, value ($V$) and action-value ($Q$) functions are defined as expectations of return over policy and transition dynamics, $V^{\pi}(s) = \mathbb{E}_{\pi,\tau} \big[ G^{\pi}(s) \big]$, $Q^{\pi}(s, a) = \mathbb{E}_{\pi,\tau} \big[ Z^{\pi}(s, a) \big]$. Actor-critic methods model state-action value function $Q$, and learning iterates between solving policy evaluation and policy improvement. 

Policy evaluation minimizes the temporal difference: $\big\| \mathcal{T}^{\pi} Q(s, a) - Q(s, a) \big\|$ by a gradient step, where $\mathcal{T}^{\pi} Q$ is the Bellman backup operator, and defined as, 

\begin{equation}
	\label{eqn:q_eval}
	\mathcal{T}^{\pi} Q(s,a) = R(s,a) + \gamma \mathbb{E}_{\substack{s'\sim\tau(\cdot \mid s,a) \\ a' \sim \pi(\cdot \mid s')}} \big[ Q(s',a') - \alpha \log \pi(a' \mid s') \big]. 
\end{equation}

Policy improvement maximizes value function $V^{\pi}(s)=\mathbb{E}_{a\sim\pi } \Big[ Q^{\pi}(s,a) - \alpha \log \pi(a|s) \Big]$ by a gradient step. After policy improvement and policy evaluation at step $k$, updated state-action value function at step $k+1$ is $Q^{k+1}(s,a) = \mathcal{T}^{*} Q^{k}(s,a)$, where $\mathcal{T}^{*}$ is the Bellman optimality operator (Equation 5 from \citet{haarnoja2017reinforcement}), 

\begin{equation}
	\label{eqn:q_optimal}
	\mathcal{T}^{*} Q(s,a) = R(s,a) + \gamma \mathbb{E}_{s'\sim\tau(\cdot \mid s,a)} \Big[ \Tilde{\alpha} \log \Big( \int_{\mathcal{A}} \exp({\Tilde{\alpha}}^{-1} {Q}(s',a')) da' \Big) \Big]. 
\end{equation}

where $\Tilde{\alpha} \in [ \alpha, \infty ) $ is defined to demonstrate the continuum of policy improvement, i.e., the fact that policy is updated by partially exploiting state-action value function at each iteration. As $\Tilde{\alpha} \rightarrow \infty$, $\mathcal{T}^{*}Q$ boils down to $\mathcal{T}^{\mathcal{U}}Q$, which represents state-action value for uniform policy. On the other hand, $\Tilde{\alpha} = \alpha$ represents updated state-action value after complete exploitation. Consequently, in deep learning setting, a higher learning rate of actor network results in lower $\Tilde{\alpha}$, closer to $\alpha$. 

\section{Distributional Reinforcement Learning and Overestimation}

\subsection{Distributional Maximum Entropy Actor-Critic}
Instead of learning state-action value function $Q$, state-action return can also be modeled as a random variable to identify possible consequences of a given policy \citep{bellemare2017distributional}. This also constitutes the methodology of distributional maximum entropy actor-critic methods \citep{duan2021distributional, ma2025dsac}. For this, state-action return distribution  $\mathcal{Z}\in\mathcal{P}(\mathbb{R}^{\mathcal{S}\times\mathcal{A}})$ is defined, where state-action returns are sampled from this distribution, $Z(s, a) \sim \mathcal{Z}(s, a)$. Recall that $Q(s,a) = \mathbb{E}_{\mathcal{Z}, \tau} [ Z(s,a) ]$. The corresponding Bellman backup operator $\mathcal{T}^{\pi}$ is defined as, 

\begin{equation}
	\label{eqn:z_eval}
	\mathcal{T}^{\pi} Z(s,a) \disteq R(s,a) + \gamma \big( Z(s',a') - \alpha \log \pi(a' \mid s') \big), \quad s'\sim\tau(\cdot \mid s,a), \quad a' \sim \pi(\cdot \mid s'), 
\end{equation}

To make notation easier, distributional Bellman backup $\mathcal{T}^{\pi}_{D}\mathcal{Z}$ is defined, where Bellman backup of sampled values are sampled from this distribution, i.e., $\mathcal{T}^{\pi} Z(s,a) \sim \mathcal{T}^{\pi}_{D}\mathcal{Z}(s, a)$. At each step $k$, the state-action return distribution is updated by minimizing Kullback-Leibler divergence from distributional Bellman backup: $D_{\mathrm{KL}} \big( \mathcal{T}^{\pi}_{D}\mathcal{Z}(s, a) || \mathcal{Z}(s, a) \big)$ by a gradient step. 

\paragraph{One-step Distributional Uncertainty} Unlike conventional distributional modeling, it is also possible to model state-action return with randomness only up to first step \citet{achab2020ranking, achab2023one}. In this approach, Bellman backup uses expectation (deterministic) of return of next state and action. This way, uncertainty due to environment stochasticity and actor-induced uncertainty (due to policy improvement) are modeled for only one step, which directly targets the source of critic overestimation. This distribution is denoted as $\mathcal{Q}$ instead of $\mathcal{Z}$. Therefore, $Q\sim\mathcal{Q}$ is a random variable unlike previous formulation. Bellman backup is defined similar as in Equation \ref{eqn:z_eval} but expectation of next-state value is used, 

\begin{equation}
	\label{eqn:q_evs_eval}
	\mathcal{T}^{\pi} Q(s,a) \disteq R(s,a) + \gamma \big( \mathbb{E}_{Q\sim\mathcal{Q}}[Q(s',a')] - \alpha \log \pi(a' \mid s') \big), \quad s'\sim\tau(\cdot \mid s,a), \quad a' \sim \pi(\cdot \mid s'). 
\end{equation}

At each step $k$, the state-action value distribution is updated by minimizing Kullback-Leibler divergence from distributional Bellman backup: $D_{\mathrm{KL}} \big( \mathcal{T}^{\pi}_{D}\mathcal{Q}(s, a) || \mathcal{Q}(s, a) \big)$ by a gradient step. Corresponding Bellman optimality operator is as follows, 

\begin{equation}
	\label{eqn:q_optimal}
	\mathcal{T}^{*} Q(s,a) \disteq R(s,a) + \gamma \mathbb{E}_{Q\sim\mathcal{Q}} \Big[ \Tilde{\alpha} \log \Big( \int_{\mathcal{A}} \exp({\Tilde{\alpha}}^{-1} {Q}(s',a')) da' \Big) \Big], \quad s'\sim\tau(\cdot \mid s,a). 
\end{equation}

This operator is proved to be a contraction in Q-learning by \citet{achab2023one}, using $\max$ instead of $\text{logsumexp}$ operator. This guarantees learning convergence in tabular setting. However, in function approximation, there are other phenomena like overestimation and overfitting. As a main focus of this work, we will discuss overestimation bounds of one-step Bellman optimality operator, and propose a method to mitigate it in the next section. 

\subsection{Quantifying Overestimation for Sub-Gaussian Critic Distributions}

Let $\mu(s,a) = \mathbb{E}_{\mathcal{Q}, \tau} [ Q(s,a) ]$, the Bellman optimality backup of the mean is not equal to the mean of the backup, $\mathcal{T}^{*} \mu(s,a) \neq \mathbb{E}_{\mathcal{Q}, \tau} [ \mathcal{T}^{*} Q(s,a) ]$ due to policy improvement, and the difference is the overestimation bias. In this section, we analyze the overestimation bias similar to \citet{chen2021randomized} and \citet{lan2020maxmin} but in the soft learning framework instead of discrete actions. We define the overestimation error as the difference between $\mathbb{E}_{\mathcal{Q}, \tau} [ \mathcal{T}^{*} Q(s,a) ]$ and average $\mathcal{T}^{*} \mu(s,a)$ as $\epsilon$, 
\begin{equation}
	\epsilon(s,a) = \mathbb{E}_{\mathcal{Q}, \tau} [ \mathcal{T}^{*} Q(s,a) ] - \mathcal{T}^{*} \mu(s,a). 
\end{equation}

In the ideal case, $\epsilon(s,a)$ should be zero if there is no overestimation, which is not the case due to critic uncertainty. To quantify it, we assume that critic distribution $\mathcal{Q}(s, a)$ is sub-Gaussian with variance proxy $\sigma^2(s,a)$ and mean $\mu(s,a)$. We adopt a sub-Gaussian assumption as it provides a mild and widely used concentration model that enables analytic overestimation bounds without restricting the critic to a specific parametric distribution.

\begin{definition}
	\label{defn:subgauss}
	A random variable $X \in \mathbb{R}$ with mean $\mu=\mathbb{E}[X]$ is called sub-Gaussian with variance proxy $\sigma^2$ if its moment generating function satisfies
	\begin{equation}
		\mathbb{E}[\exp(\lambda X)] \leq \exp \big( \lambda \mu + \frac{1}{2} \lambda^2 \sigma^2 \big), \quad \forall \lambda \in \mathbb{R}. 
	\end{equation}
\end{definition}

Under the sub-Gaussian assumption, Theorem \ref{thm:overestimation} quantifies how critic uncertainty propagates through the soft Bellman optimality operator and Corollary \ref{cor:pessimism} motivates a variance-dependent pessimistic correction.

\begin{theorem}[Overestimation quantification for sub-Gaussian critics]
	\label{thm:overestimation}
	Let given state-action value distribution $\mathcal{Q}\in\mathcal{P}(\mathbb{R}^{\mathcal{S}\times\mathcal{A}})$ be sub-Gaussian with mean $\mu(s,a)$ and variance proxy $\sigma^2(s,a)$ for all state-action pairs, with bounded support. For each sample $Q\sim\mathcal{Q}$,  
	\begin{equation}
		\mathcal{T}^{*} Q(s,a) \leq R(s,a) + \gamma \Tilde{\alpha} \log \Big( \int\limits_{\mathcal{A}} \exp( {\Tilde{\alpha}}^{-1} \mu(s',a') + \frac{1}{2} {\Tilde{\alpha}}^{-2} \sigma^2(s',a') ) da' \Big). \\
	\end{equation}
\end{theorem}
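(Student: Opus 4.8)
The plan is to strip away the outer structure of the one-step soft Bellman optimality operator and reduce the claim to the moment-generating-function inequality in Definition~\ref{defn:subgauss}. Fixing the successor state $s'\sim\tau(\cdot\mid s,a)$, the one-step distributional optimality operator reads $\mathcal{T}^{*}Q(s,a) = R(s,a) + \gamma\,\mathbb{E}_{Q\sim\mathcal{Q}}\!\big[\Tilde{\alpha}\log\int_{\mathcal{A}}\exp(\Tilde{\alpha}^{-1}Q(s',a'))\,da'\big]$, so it suffices to bound the inner expectation by $\Tilde{\alpha}\log\int_{\mathcal{A}}\exp\!\big(\Tilde{\alpha}^{-1}\mu(s',a') + \tfrac{1}{2}\Tilde{\alpha}^{-2}\sigma^{2}(s',a')\big)\,da'$; adding $R(s,a)$ and multiplying by $\gamma>0$ then gives the stated inequality for each realization of $s'$, hence pointwise in $s'$. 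Throughout I will use that $\alpha>0$ and $\Tilde{\alpha}\in[\alpha,\infty)$, so $\Tilde{\alpha}^{-1}>0$ and $t\mapsto\Tilde{\alpha}\log t$ is increasing.

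First I would apply Jensen's inequality: since $\log$ is concave, pushing $\mathbb{E}_{Q\sim\mathcal{Q}}$ through it gives $\mathbb{E}_{Q}\big[\Tilde{\alpha}\log\int_{\mathcal{A}}\exp(\Tilde{\alpha}^{-1}Q(s',a'))\,da'\big] \le \Tilde{\alpha}\log\,\mathbb{E}_{Q}\big[\int_{\mathcal{A}}\exp(\Tilde{\alpha}^{-1}Q(s',a'))\,da'\big]$. Next I would exchange the expectation over $\mathcal{Q}$ with the integral over $\mathcal{A}$: the integrand is nonnegative, and the bounded-support hypothesis on $\mathcal{Q}$ (together with finiteness of the action integral, which is already required for the soft operator to be well-defined) makes everything integrable, so Tonelli's theorem yields $\mathbb{E}_{Q}\big[\int_{\mathcal{A}}\exp(\Tilde{\alpha}^{-1}Q(s',a'))\,da'\big] = \int_{\mathcal{A}}\mathbb{E}_{Q}\big[\exp(\Tilde{\alpha}^{-1}Q(s',a'))\big]\,da'$.

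Then I would invoke sub-Gaussianity pointwise in $a'$. For each fixed $(s',a')$ the marginal of $\mathcal{Q}$ is sub-Gaussian with mean $\mu(s',a')$ and variance proxy $\sigma^{2}(s',a')$, so Definition~\ref{defn:subgauss} applied with $\lambda=\Tilde{\alpha}^{-1}$ gives $\mathbb{E}_{Q}\big[\exp(\Tilde{\alpha}^{-1}Q(s',a'))\big] \le \exp\!\big(\Tilde{\alpha}^{-1}\mu(s',a') + \tfrac{1}{2}\Tilde{\alpha}^{-2}\sigma^{2}(s',a')\big)$. Integrating this bound over $\mathcal{A}$, applying the increasing map $\Tilde{\alpha}\log(\cdot)$, and restoring the factor $\gamma$ and the term $R(s,a)$ then assembles the desired inequality.

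I do not expect a deep obstacle here; the proof is just a composition of Jensen, Tonelli, and the sub-Gaussian bound. The two places that need care are (i) the \emph{direction} of Jensen --- it is precisely the concavity of $\log$ that turns the expectation into an \emph{upper} bound, and if the operator used $\max$ in place of log-sum-exp this step would need reworking --- and (ii) the expectation/integral interchange, which is where the bounded-support assumption is used. It is also worth spelling out in the write-up what ``for each sample $Q\sim\mathcal{Q}$'' means here: the inequality concerns $\mathcal{T}^{*}$ applied to the successor-state return distribution in its $\mathbb{E}_{Q\sim\mathcal{Q}}$ form, not a per-realization bound on $\int_{\mathcal{A}}\exp(\Tilde{\alpha}^{-1}Q(s',a'))\,da'$, which would be false for large $\Tilde{\alpha}$.
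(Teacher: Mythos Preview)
Your proposal is correct and follows essentially the same route as the paper's own proof: Jensen's inequality via concavity of $\log$, Tonelli's theorem to swap $\mathbb{E}_{Q\sim\mathcal{Q}}$ with the action integral, and then the sub-Gaussian MGF bound from Definition~\ref{defn:subgauss} applied pointwise with $\lambda=\Tilde{\alpha}^{-1}$. Your additional remarks on the direction of Jensen and on the meaning of ``for each sample $Q\sim\mathcal{Q}$'' are helpful clarifications beyond what the paper writes, but the mathematical content is the same.
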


\begin{corollary}[Pessimistic critic target]
	\label{cor:pessimism}
	For policy evaluation, using pessimistic Bellman backup, 
	\begin{equation}
		\label{eqn:q_evs_eval_pessimistic}
		\mathcal{T}_{\beta}^{\pi} Q(s,a) \disteq R(s,a) + \gamma \big( \mu(s',a') - \beta \sigma(s',a') - \alpha \log \pi(a' \mid s') \big), \quad s'\sim\tau(\cdot \mid s,a), \quad a' \sim \pi(\cdot \mid s'), 
	\end{equation}
	prevents overestimation as long as $\beta \geq \max\limits_{(s, a)} \frac{1}{2}{\Tilde{\alpha}}^{-1} \sigma(s, a)$. 
\end{corollary}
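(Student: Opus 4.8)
The plan is to read the Corollary as a one-line consequence of Theorem~\ref{thm:overestimation} applied to a \emph{shifted} critic distribution. The key observation is that subtracting the deterministic quantity $\beta\sigma(s',a')$ from the sampled critic value $Q(s',a')$ yields a random variable that is again sub-Gaussian with the \emph{same} variance proxy $\sigma^2(s',a')$ and with mean $\mu(s',a')-\beta\sigma(s',a')$: directly from Definition~\ref{defn:subgauss}, for a constant $c$, $\mathbb{E}[\exp(\lambda(X-c))]=e^{-\lambda c}\mathbb{E}[\exp(\lambda X)]\le\exp(\lambda(\mu-c)+\tfrac12\lambda^2\sigma^2)$, and a shift of bounded support is again bounded. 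Hence the pessimistic backup $\mathcal{T}_{\beta}^{\pi}$ is the ordinary backup applied to the shifted distribution, and after the continuum-of-improvement step (temperature $\Tilde{\alpha}$) the induced soft-optimality target is controlled by Theorem~\ref{thm:overestimation} with $\mu(s',a')$ replaced by $\mu(s',a')-\beta\sigma(s',a')$.

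Concretely, I would first write out the expected pessimistic soft-optimality target and invoke Theorem~\ref{thm:overestimation} on the shifted distribution to get the pointwise-in-$s'$ bound
\[
R(s,a) + \gamma\Tilde{\alpha}\log\Big(\int_{\mathcal{A}}\exp\big(\Tilde{\alpha}^{-1}(\mu(s',a')-\beta\sigma(s',a')) + \tfrac12\Tilde{\alpha}^{-2}\sigma^2(s',a')\big)\,da'\Big).
\]
Next I would factor the exponent as $\Tilde{\alpha}^{-1}\mu(s',a') + \Tilde{\alpha}^{-1}\sigma(s',a')\big(\tfrac12\Tilde{\alpha}^{-1}\sigma(s',a')-\beta\big)$ and observe that the bracketed factor is $\le 0$ exactly when $\beta\ge\tfrac12\Tilde{\alpha}^{-1}\sigma(s',a')$; demanding this at every state--action pair is precisely the hypothesis $\beta\ge\max_{(s,a)}\tfrac12\Tilde{\alpha}^{-1}\sigma(s,a)$ (and if $\sigma$ is only known up to a uniform bound, one uses that bound instead). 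Under this hypothesis the exponent is dominated by $\Tilde{\alpha}^{-1}\mu(s',a')$, so by monotonicity of $\exp$, of the integral, and of $\Tilde{\alpha}\log(\cdot)$ (recall $\Tilde{\alpha}>0$), the displayed bound is itself $\le R(s,a)+\gamma\Tilde{\alpha}\log(\int_{\mathcal{A}}\exp(\Tilde{\alpha}^{-1}\mu(s',a'))\,da')$, which (after taking $\mathbb{E}_{s'\sim\tau(\cdot\mid s,a)}$, legitimate since the bound holds for every $s'$) equals $\mathcal{T}^{*}\mu(s,a)$. Therefore the expected pessimistic target does not exceed $\mathcal{T}^{*}\mu(s,a)$, i.e.\ the overestimation error $\epsilon(s,a)$ becomes $\le 0$: overestimation is eliminated.

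The step I expect to require the most care is not an inequality but the modeling identification: making precise that ``using $\mathcal{T}_{\beta}^{\pi}$ for policy evaluation'', once composed with the partial policy-improvement step parametrized by $\Tilde{\alpha}$, produces a soft-optimality target of the exact form that Theorem~\ref{thm:overestimation} governs---so that the distributional critic's randomness genuinely propagates through the soft-max of the improved policy, the sub-Gaussian inflation term $\tfrac12\Tilde{\alpha}^{-2}\sigma^2$ really appears, and it is this term that the correction $-\beta\sigma$ must offset. Once that identification is in place, the remainder is the scalar algebra above together with reuse of the measure-theoretic justifications from the proof of Theorem~\ref{thm:overestimation} (Jensen for the concave $\log$, Tonelli for the expectation--integral exchange, admissibility of $\lambda=\Tilde{\alpha}^{-1}$ since $\Tilde{\alpha}\in[\alpha,\infty)$ with $\alpha>0$). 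I would also note that the $\max$ over state--action pairs is exactly what allows a single scalar $\beta$ to suffice, and that as $\Tilde{\alpha}\to\infty$ the required threshold $\tfrac12\Tilde{\alpha}^{-1}\sigma$ tends to $0$, consistent with the uniform-policy limit and with the vanishing of the inflation term.
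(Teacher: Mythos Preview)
Your proposal is correct and follows essentially the same route as the paper: apply Theorem~\ref{thm:overestimation} to the shifted sub-Gaussian critic $Q-\beta\sigma$ (mean $\mu-\beta\sigma$, same variance proxy), obtain the exponent $\Tilde{\alpha}^{-1}(\mu-\beta\sigma)+\tfrac12\Tilde{\alpha}^{-2}\sigma^2$, and then use the hypothesis on $\beta$ together with monotonicity to bound by $\mathcal{T}^{*}\mu$. The paper packages the last step by defining $\mu^{\dagger}=\mu-\beta\sigma+\tfrac12\Tilde{\alpha}^{-1}\sigma^2$ and observing $\mu^{\dagger}\le\mu$, whereas you factor the exponent directly; these are the same argument, and your discussion of the modeling identification and the measure-theoretic justifications is, if anything, more explicit than the paper's.
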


Moreover, maximum overestimation bound is demonstrated in Theorem \ref{thm:overestimation_bound}.

\begin{theorem}[Overestimation bound]
	\label{thm:overestimation_bound}
	Overestimation due to uncertainty of distribution $\mathcal{Q}$, denoted as $\epsilon$, is upper bounded for Bellman updates, 
	\begin{align}
		\epsilon(s,a) \leq \frac{\gamma}{2\Tilde{\alpha}} \mathbb{E}_{s'\sim\tau} \Big[ \max_{a'} \sigma^2(s',a') \Big]. 
	\end{align} 
\end{theorem}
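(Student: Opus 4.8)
The plan is to reduce the claim to a bound on a single softmax log-ratio. Starting from $\epsilon(s,a)=\mathbb{E}_{\mathcal{Q},\tau}[\mathcal{T}^{*}Q(s,a)]-\mathcal{T}^{*}\mu(s,a)$, the first observation is that the reward term $R(s,a)$ appears identically in both summands (it is not affected by $\mathcal{Q}$ or the policy improvement) and cancels, leaving only the discounted soft-max contributions. Taking the expectation $\mathbb{E}_{\mathcal{Q}}$ and then $\mathbb{E}_{s'\sim\tau}$ of the pointwise bound in Theorem~\ref{thm:overestimation}, whose right-hand side depends only on $\mu$ and $\sigma^{2}$ and not on the sampled $Q$, gives
\[
\mathbb{E}_{\mathcal{Q},\tau}[\mathcal{T}^{*}Q(s,a)]\le R(s,a)+\gamma\,\mathbb{E}_{s'\sim\tau}\!\left[\tilde\alpha\log\int_{\mathcal{A}}\exp\!\big(\tilde\alpha^{-1}\mu(s',a')+\tfrac12\tilde\alpha^{-2}\sigma^{2}(s',a')\big)\,da'\right],
\]
where the bounded-support hypothesis on $\mathcal{Q}$ guarantees that every integral and expectation above is finite.

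Next I would subtract the explicit form $\mathcal{T}^{*}\mu(s,a)=R(s,a)+\gamma\,\mathbb{E}_{s'\sim\tau}[\tilde\alpha\log\int_{\mathcal{A}}\exp(\tilde\alpha^{-1}\mu(s',a'))\,da']$ and merge the two logarithms inside the expectation, reducing the bound on $\epsilon(s,a)$ to
\[
\epsilon(s,a)\le\gamma\,\mathbb{E}_{s'\sim\tau}\!\left[\tilde\alpha\log\frac{\int_{\mathcal{A}}\exp\!\big(\tilde\alpha^{-1}\mu(s',a')+\tfrac12\tilde\alpha^{-2}\sigma^{2}(s',a')\big)\,da'}{\int_{\mathcal{A}}\exp\!\big(\tilde\alpha^{-1}\mu(s',a')\big)\,da'}\right].
\]
The key step is to control this log-ratio. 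Writing the numerator integrand as $\exp(\tilde\alpha^{-1}\mu(s',a'))\cdot\exp(\tfrac12\tilde\alpha^{-2}\sigma^{2}(s',a'))$ and replacing $\sigma^{2}(s',a')$ by $\max_{a'}\sigma^{2}(s',a')$ pulls the variance factor out of the numerator integral, so that the ratio is at most $\exp(\tfrac12\tilde\alpha^{-2}\max_{a'}\sigma^{2}(s',a'))$; equivalently, the ratio equals $\mathbb{E}_{a'\sim p_{s'}}[\exp(\tfrac12\tilde\alpha^{-2}\sigma^{2}(s',a'))]$ for the Boltzmann policy $p_{s'}(a')\propto\exp(\tilde\alpha^{-1}\mu(s',a'))$, which is dominated by its pointwise maximum. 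Hence the term in brackets is at most $\tilde\alpha\cdot\tfrac12\tilde\alpha^{-2}\max_{a'}\sigma^{2}(s',a')=\tfrac{1}{2\tilde\alpha}\max_{a'}\sigma^{2}(s',a')$.

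Substituting this back and taking the outer expectation over $s'\sim\tau$ yields $\epsilon(s,a)\le\tfrac{\gamma}{2\tilde\alpha}\,\mathbb{E}_{s'\sim\tau}[\max_{a'}\sigma^{2}(s',a')]$, which is the asserted bound. I do not expect a genuine obstacle here; the only points that require care are that the $\max_{a'}$ relaxation must be applied only inside the numerator, so the $\exp(\tilde\alpha^{-1}\mu)$ factors cancel cleanly against the denominator rather than the denominator being crudely lower-bounded, and that the interchange of $\mathbb{E}_{\mathcal{Q}}$ with $\log$ and with the $\mathcal{A}$-integral — handled by Jensen's inequality and Fubini — has already been carried out inside the proof of Theorem~\ref{thm:overestimation}, so it need not be redone.
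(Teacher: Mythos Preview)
Your proposal is correct and follows essentially the same route as the paper: apply Theorem~\ref{thm:overestimation}, take the expectation over $s'\sim\tau$, and bound the resulting integral by pulling out $\max_{a'}\exp(\tfrac12\tilde\alpha^{-2}\sigma^{2}(s',a'))$ so that the $\exp(\tilde\alpha^{-1}\mu)$ factors cancel against $\mathcal{T}^{*}\mu$. The only cosmetic difference is that you first form the log-ratio and interpret it as a Boltzmann expectation before bounding by the max, whereas the paper factors inside the integral directly (citing the mean value theorem for integrals) and then splits the $\log$; the underlying inequality is identical.
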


As an additional notation, we introduce pessimistic distributional Bellman backup $\mathcal{T}_{\beta, D}^{\pi}\mathcal{Q}$ where pessimistic Bellman backup of sampled values are sampled from this distribution, i.e.,  $\mathcal{T}_{\beta}^{\pi} Q(s,a) \sim \mathcal{T}^{\pi}_{\beta, D}\mathcal{Q}(s, a)$. The difference between $\mathcal{T}^{\pi}_{\beta, D}\mathcal{Q}(s, a)$ and $\mathcal{T}^{\pi}_{D}\mathcal{Q}(s, a)$ is illustrated in Figure \ref{fig:evol_bellman_backup}. 

Taken together, these results show that overestimation in off-policy actor-critic methods originates from temporal uncertainty in Bellman targets, and that a simple variance-proxy-based pessimistic shift is sufficient to control it. This observation motivates the Stochastic Actor-Critic algorithm introduced next.

\section{Stochastic Actor-Critic}
\label{sec:stac}

In this section, we discuss key mechanisms needed for computational and sample efficient actor-critic learning and propose our algorithm \textit{Stochastic Actor-Critic}. This algorithm employs a \textit{single distributional} critic network $\mathcal{Q}_{\theta}$ by parameter set $\theta$ which captures temporal (one-step) aleatoric uncertainty, and dropout along with layer normalization for network regularization. Actor network $\pi_{\phi}$ outputs a \verb|tanh| transformed normal distribution to bound actions to $[-1, 1]$, and is parameterized by set $\phi$. Unlike other methods, STAC uses critic prediction in a pessimistic manner using temporal aleatoric uncertainty, for policy evaluation and policy improvement. 

The rationale behind this argument is that most of the temporal difference target randomness due to environment stochasticity and ongoing actor updates (policy improvement) inherently appears in the form of aleatoric uncertainty from the critic's side, and overestimation is the result of this randomness. Although policy improvement itself is deterministic given a critic, small approximation errors in the critic are selectively exploited by the policy update in a non-stationary manner, which induces unpredictable variability in Bellman targets in practice. Unlike distributional methods \citep{bellemare2017distributional}, we do not model aleatoric uncertainty of the full return, since it is not directly related to overestimation.

Although return uncertainty is modeled only for one-step, pessimism in STAC also leads to risk-averse learning since critic is also learned by pessimistic updates. Conventional distributional algorithms model full return as a distribution. For risk-averse learning only actor is updated in pessimistic way, not critic, usually by conditional value at risk measure (CVaR) \citep{dabney2018implicit, tang2019worst, yang2021wcsac, kim2022efficient, ma2025dsac} or by standard deviation \citep{ma2025dsac}. 

Epistemic pessimism naturally decreases the degree of overestimation by decreasing probability of critic errors which are exploited by policy improvement later. However, out-of-distribution states and actions are detected by high epistemic uncertainty, and should not be suppressed by pessimism. On the contrary, these state-action pairs should be visited for exploration. Therefore, epistemic uncertainty should not be used as a pessimism signal for overestimation mitigation in our opinion. 

\citep{nauman2024overestimation} demonstrated that network regularization methods improve performance. Since STAC employs a single critic network, it is prone to overfitting more than ensemble based methods. To prevent it, STAC employs dropout \citep{srivastava2014dropout} similar to \citet{hiraoka2021dropout}. While dropout may also promote exploration by injecting stochasticity into the policy and value estimates, we do not attempt to isolate or prove this effect here. Dropout also allows capturing the probabilistic nature of a network, representing Bayesian neural networks \citep{Gal2016Dropout}. However, it is important to note that STAC employs dropout on both critic and actor networks only for regularization purposes, not for epistemic modeling. 

\begin{figure}
	\centering
	\begin{tikzpicture}	
		\def\rew{-1.3}	
		\def\gam{0.85}
		\def\pess{1.2}
		\def\zmu{4.2};
		\def\zsig{1.9};
		\def\nzmu{9.0};
		\def\nzsig{1.5};
		\def\xmax{\nzmu+3.5*\nzsig};
		\def\xmin{\zmu-3.5*\zsig}
		\def\ymin{{-0.2*gauss(\nzmu,\nzmu,\nzsig)}};
		\def\ymax{{1.4*gauss(\nzmu,\nzmu,\nzsig)}}
		
		\begin{axis}[every axis plot post/.append style={
				mark=none,domain={-0.05*(\xmax)}:{1.08*\xmax},samples=\N,smooth},
			xmin=\xmin, xmax=\xmax,
			ymin=\ymin, ymax=\ymax,
			axis lines=middle,
			axis line style=thick,
			enlargelimits=upper, 
			ticks=none,
			xlabel=$Q$,
			every axis x label/.style={at={(current axis.right of origin)},anchor=north west},
			width=0.7*\textwidth, height=0.5*\textwidth,
			y=250pt
			]
			
			\addplot[blue, name path=B,thick] {gauss(x,\nzmu,\nzsig)};
			\addplot[teal, name path=B,thick] {gauss(x,\nzmu-\pess*\nzsig,\nzsig)};
			\addplot[red,  name path=S,thick] {gauss(x,\zmu,\zsig)};
			\addplot[black,dashed,thin]
			coordinates {(\nzmu,0) (\nzmu, {1.05*gauss(\nzmu,\nzmu,\nzsig)})}
			node[below=0pt,pos=0] {};			
			\addplot[black,dashed,thin]
			coordinates {(\nzmu-\pess*\nzsig,0) (\nzmu-\pess*\nzsig, {1.05*gauss(\nzmu,\nzmu,\nzsig)})}
			node[below=0pt,pos=0] {};
			
			\addplot[<-,black,thin,dashed]
			coordinates {(\nzmu-\pess*\nzsig,{1.01*gauss(\nzmu,\nzmu,\nzsig)}) (\nzmu, {1.02*gauss(\nzmu,\nzmu,\nzsig)})}
			node[above,at end] {$-\gamma \beta \sigma(s',a')$};
			
			
			\node[above=2pt,     black!12!blue] at (\nzmu+1.75*\pess*\nzsig,0.15) {$\mathcal{T}_{D}\mathcal{Q}(s,a)$};
			\node[above=2pt,     black!12!teal] at (\nzmu-2.5*\pess*\nzsig,0.2) {$\mathcal{T}_{\beta,D}\mathcal{Q}(s,a)$};									
			\node[above=2pt,     black!12!red]  at (\zmu-1.25*\pess*\zsig,0.1) {$\mathcal{Q}(s,a)$};
			
		\end{axis}
	\end{tikzpicture}
	\caption{Evolution of the Pessimistic Distributional Bellman Backup. While $\mathcal{T}^{\pi}_{D}\mathcal{Q}(s, a)$ is overestimated, a corrected backup $\mathcal{T}^{\pi}_{\beta, D}\mathcal{Q}(s, a)$ is closer to $\mathcal{Q}(s, a)$. }	
	\label{fig:evol_bellman_backup}
\end{figure}
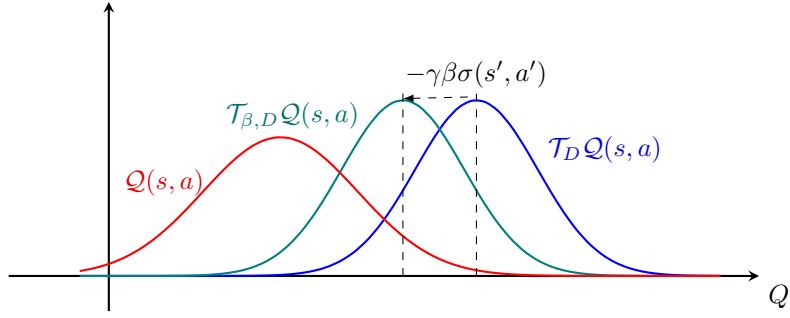

\subsection{Policy Evaluation}

For simplicity, STAC models critic as normal distribution. This contradicts the bounded distribution assumption of Theorem \ref{thm:overestimation}, but it yields a simple loss function and is easy to interpret. Still, it is reasonable to assume critic distribution is bounded for finite horizon or discounted MDPs ($\gamma<1$) with bounded reward functions. 

Using transition tuples from experience replay as batch, $\mathcal{D}_b = \{(s_i, a_i, r_i, s_i', \texttt{done}_i)\}_{i=1}^{N_b}$, temporal difference (TD) target $Q_i^{TD}$, representing Bellman backup, is $\beta$-pessimistic, 

\begin{equation}
	\label{eqn:td_target}
	Q_i^{TD} = r_i + \gamma (\mu_{\bar{\theta}}(s_i', \Tilde{a}_{i}') - \beta \sigma_{\bar{\theta}}(s_i', \Tilde{a}_{i}') - \alpha \log \pi_{\phi}(s_i', \Tilde{a}_{i}')) (\neg\texttt{done}_i), \quad \Tilde{a}_{i}'\sim\pi_{\phi}(\cdot \mid s_i'). 
\end{equation}

Learning objective is cross-entropy loss (log loss), 

\begin{equation}
	\mathcal{L}_{\theta}(\mathcal{D}_b) = \frac{1}{N_{b}} \sum_{i=1}^{N_b}  - \log{\mathcal{Q}_{\theta} (Q_i^{TD} \mid s_i,a_i)} = \frac{1}{2} \log{2\pi} + \frac{1}{2 N_{b}} \sum_{i=1}^{N_b}  \Big( \log{\sigma^2_{\theta}(s_i,a_i)} + \frac{(Q_i^{TD} - \mu_{\theta} (s_i,a_i))^2}{\sigma^2_{\theta}(s_i,a_i)} \Big). 
\end{equation}

\subsection{Policy Improvement}

According to Corollary \ref{cor:pessimism}, pessimistic TD targets should be used to train critic network. However, this analysis does not account for policy learning since policy is assumed as $\text{softmax}$ over state-action values. In actor-crtic framework, same pessimistic objective should be used for policy improvement, since policy evaluation and policy improvement objective should be the same. In other words, at learning step $k$, Bellman optimality must be equal to Bellman backup with the new policy, $\mathcal{T}^* Q^k \disteq \mathcal{T}^{\pi^{k+1}} Q^k$. It is only possible by using same pessimistic critic value for policy improvement. 

\subsection{Other Details and Algorithm Summary}

\paragraph{Layer Normalization} 
STAC implements Layer Normalization \citep{ba2016layer} after all hidden activations of critic and actor networks, in order to stabilize learning and prevent possible numerical instabilities. This has shown to improve performance significantly in deep reinforcement learning \citep{nauman2024overestimation, hiraoka2021dropout}. 

\paragraph{Lagged critic for TD target}
When the trained critic network is also used in calculating the target value, the critic training is prone to divergence \citep{li2023realistic}. For this, a common approach is to use another critic network to evaluate TD target \citep{mnih2013playing}. Similar to \citet{lillicrap2015continuous}, \citet{fujimoto2018addressing}, and \citet{haarnoja2018soft}, we use a delayed form of critic network for TD target evaluations as demonstrated in Equation \ref{eqn:td_target}. The parameters of target critic are only updated by Polyak averaging of main critic network weights through learning steps; $\bar{\theta} \leftarrow \rho \bar{\theta} + (1-\rho) \theta $. This strategy is important to ensure the stability of temporal difference learning. 

\paragraph{Automatic temperature tuning} Using constant temperature results in different policies if the reward magnitude changes. To mitigate this, \citet{haarnoja2018soft} proposed a policy entropy constraint, representing temperature as the Lagrange multiplier of the constraint. Given target entropy $\mathcal{\bar{H}}$ as hyper-parameter, the loss function related to this constraint is as follows;  
\begin{equation}
	\mathcal{L}_{\alpha}(\mathcal{D}_b) = - \alpha \mathcal{\bar{H}} + \alpha \sum_{i=1}^{N_b} \mathbb{E}_{a\sim\pi_{\phi}(\cdot \mid s_i)} \big[ -\log \pi_{\phi}(a \mid s_i) \big].  
\end{equation} 

Networks illustrations are available in Appendix \ref{app:network_archs}. Note that the bar notation stands for the lagged network with non-trainable parameters. STAC is summarized in Algorithm \ref{alg:stac} with gradient descent but Adam optimizer \citep{kingma2014adam} is used in our experiments. 

\begin{algorithm}
	\caption{Stochastic Actor-Critic}
	\label{alg:stac}
	\begin{algorithmic}
		\Require Environment $\texttt{env}$ 
		\Require Experience buffer $\mathcal{D}$ 
		\Require Critic $\mathcal{Q}_{\theta}$, lagged critic $\mathcal{Q}_{\bar{\theta}}$, actor $\pi_{\phi}$, all with dropout 
		\Require Initial temperature $\alpha$, target entropy $\Bar{\mathcal{H}}$  
		\Require Pessimism $\beta$ 
		\Require Learning rates $\eta_{Q}$, $\eta_{\pi}$, $\eta_{\alpha}$, Polyak parameter $\rho$
		\Require Total training steps $N$, batch size $N_b$
		\State $s \sim \texttt{env.reset}()$ \Comment{Reset the environment}
		\For{$N$ timesteps} 
		\State $a \sim \pi_{\phi}(\cdot \mid s)$  \Comment{Sample action}
		\State $r, s', \texttt{done} \sim \texttt{env.step}(a)$ \Comment{Act on environment}
		\State $\mathcal{D} \leftarrow \mathcal{D} \cup (s, a, r, s', \texttt{done})$  \Comment{Record transition tuple}
		\State \algorithmicif\ $\texttt{~done}$ \algorithmicthen\ $s \leftarrow s'$ \algorithmicelse\ $s \sim \texttt{env.reset}() $  \Comment{State transition or reset}
		\For{$G$ gradient steps}
		\State $\mathcal{D}_b = \{(s_i, a_i, r_i, s_i', \texttt{done}_i)\}_{i=1}^{N_b}  \sim  \mathcal{D} $  \Comment{Sample minibatch for training}
		\State $\Tilde{a}_{i}'\sim\pi_{\phi}(\cdot \mid s_i')$ \Comment{Sample next actions}
		\State $Q_i^{TD} = r_i + \gamma (\mu_{\bar{\theta}}(s_i', \Tilde{a}_{i}') - \beta \sigma_{\bar{\theta}}(s_i', \Tilde{a}_{i}') -\alpha \log \pi_{\phi}(\Tilde{a}_{i}' \mid s_i') ) (\neg\texttt{done}_i)$ \Comment{Build TD targets}
		\State $\theta \leftarrow \theta - \eta_{Q} \nabla_{\theta} \Big( \frac{1}{N_{b}} \sum_{i=1}^{N_b}  - \log{\mathcal{Q}_{\theta} (Q_i^{TD} \mid s_i,a_i)} \Big) $ \Comment{Update critic}
		\State $\phi \leftarrow \phi - \eta_{\pi} \nabla_{\phi} \Big( \frac{1}{N_{b}} \sum_{i=1}^{N_b} \mathbb{E}_{a\sim\pi_{\phi}(\cdot \mid s_i)} \big[ \mu_{\theta}(s_i, a) - \beta \sigma_{\theta}(s_i, a) - \alpha \log{\pi_{\phi}(a \mid s_i)} \big] \Big) $  \Comment{Update actor}
		\State $\alpha \leftarrow \alpha - \eta_{\alpha} \nabla_{\alpha} \Big( - \alpha \mathcal{\bar{H}} + \alpha \sum_{i=1}^{N_b} \mathbb{E}_{a\sim\pi_{\phi}(\cdot \mid s_i)} \big[ -\log \pi_{\phi}(a \mid s_i) \big] \Big) $  \Comment{Update temperature}
		\State $\bar{\theta} \leftarrow \rho \bar{\theta} + (1-\rho) \theta $  \Comment{Update target critic network}
		\EndFor
		\EndFor
		
	\end{algorithmic}
\end{algorithm}

\section{Experiments}
\label{sec:experiments}

Using the Gymnasium API \citep{towers_gymnasium_2023}, six MuJoCo and three Box2D environments are used for evaluation, as they are standard benchmarks in the literature. From Box2D, \verb|BipedalWalker-v3| and \verb|BipedalWalkerHardcore-v3| model a two-legged robot on randomly generated terrain, while \verb|LunarLander-v3| models a landing spaceship under wind and turbulence, which is maximized in our experiments. Therefore, these environments are inherently stochastic unlike MuJoCo environments. 


For evaluation, after each 1000 time steps, we execute a single test episode using the online policy and measure its performance by calculating the total reward accumulated during the episode. In a test episode, dropout and policy temperature are set to zero, following the best practice in the literature. Specified environments are trained through a fixed number of environment interactions, repeated with 5 seeds to assess the stability of the algorithm. Further experimental details are presented in Appendix \ref{app:hyperparam_exp}.

Hyper-parameters per environment can be found in Table \ref{tab:parameters_algos} of Appendix \ref{app:hyperparam_exp}. For all experiments, PyTorch (version 2.7.1) \citep{paszke2019pytorch} is used. Please refer to Appendix \ref{app:code} for the codebase. 

\subsection{Comparison to Other Algorithms}

Our experiments aim to investigate whether enhancing off-policy actor-critic methodology with STAC can improve their sample and computational efficiency on continuous-control benchmarks. For this purpose, STAC is compared to similar competitive algorithms; Distributional Soft-Actor Critic (DSAC) \citep{ma2025dsac}, and Soft Actor-Critic (SAC) \citep{haarnoja2018soft}. Both methods use minimum of double critics for updates. All algorithm results are obtained using in-house code with the same network architectures (including layer normalization but not dropout) to make a fair comparison. 

We also add another algorithm called Epistemic Stochastic Actor-Critic (ESTAC), with double distributional critic with one-step uncertainty similar to STAC. For overestimation, minimum of two critic is used for learning similar to SAC and DSAC, instead of temporal aleatoric pessimism. The purpose is to isolate the source of STAC's improvements by replacing temporal aleatoric pessimism with epistemic overestimation mitigation.

The pessimism rates and dropout configurations used for comparison are resulted from ablation studies, and explained in the next section. Optimal configurations are summarized in Table \ref{tab:target_ent_pessimism_algos}. 

\paragraph{Learning curves} In Figure \ref{fig:main_comparison}, the performance of STAC is shown against previously mentioned state-of-the-art algorithms. Additionally, value estimation errors are presented in Figure \ref{fig:error_comparison}. Value estimation error is measured as the difference between critic prediction and discounted return collected during evaluation episodes. The bold lines represent the inter-quartile mean across random seeds, while the shaded area indicates the corresponding quartile ranges of the total reward across different seeds. Curves are smoothed through time for better visibility. 

Final performane of learning processes averaged over random seeds are summarized in Table \ref{tab:env_iqm_results}. 


\begin{table}[h!]
	\addtolength{\tabcolsep}{1pt}
	\centering
	\caption{Inter-quartile mean results of last \%1 of evaluation episodes, $\beta$ and dropout for STAC is selected with best performance. } 
	\label{tab:env_iqm_results}
	\begin{tabular}{|c c c c c c |}
		\hline
		Env  & \# steps & DSAC & ESTAC & SAC & STAC \\ 
		\hline		
		\verb~Ant-v4~ & 3M & \textbf{7543.1} & 5798.9 & 7407.1 & 6907.3 \\
		\verb~BipedalWalker-v3~ & 1M & \textbf{335.3} & 319.2 & 326.3 & 308.5 \\
		\verb~BipedalWalkerHardcore-v3~ & 3M & 56.8 & 32.3 & -50.8 & \textbf{159.9} \\
		\verb~HalfCheetah-v4~ & 3M & \textbf{14921.5} & 13241.6 & 11475.6 & 14569.3 \\
		\verb~Hopper-v4~ & 1M & 1762.9 & 944.3 & 3196.4 & \textbf{3363.7} \\
		\verb~Humanoid-v4~ & 3M & \textbf{9484.1} & 8608.9 & 6990.4 & 8726.2 \\
		\verb~LunarLander-v3~ & 1M & 265.4 & 258.3 & \textbf{268.9} & 265.7 \\
		\verb~Swimmer-v4~ & 1M & 69.3 & 104.1 & 54.2 & \textbf{125.0} \\
		\verb~Walker2d-v4~ & 3M & 5981.4 & 5952.5 & 5036.4 & \textbf{6634.7} \\
		\hline
	\end{tabular}
\end{table}

\begin{figure}
	\centering
	\includegraphics[width=\textwidth,height=\textheight,keepaspectratio]{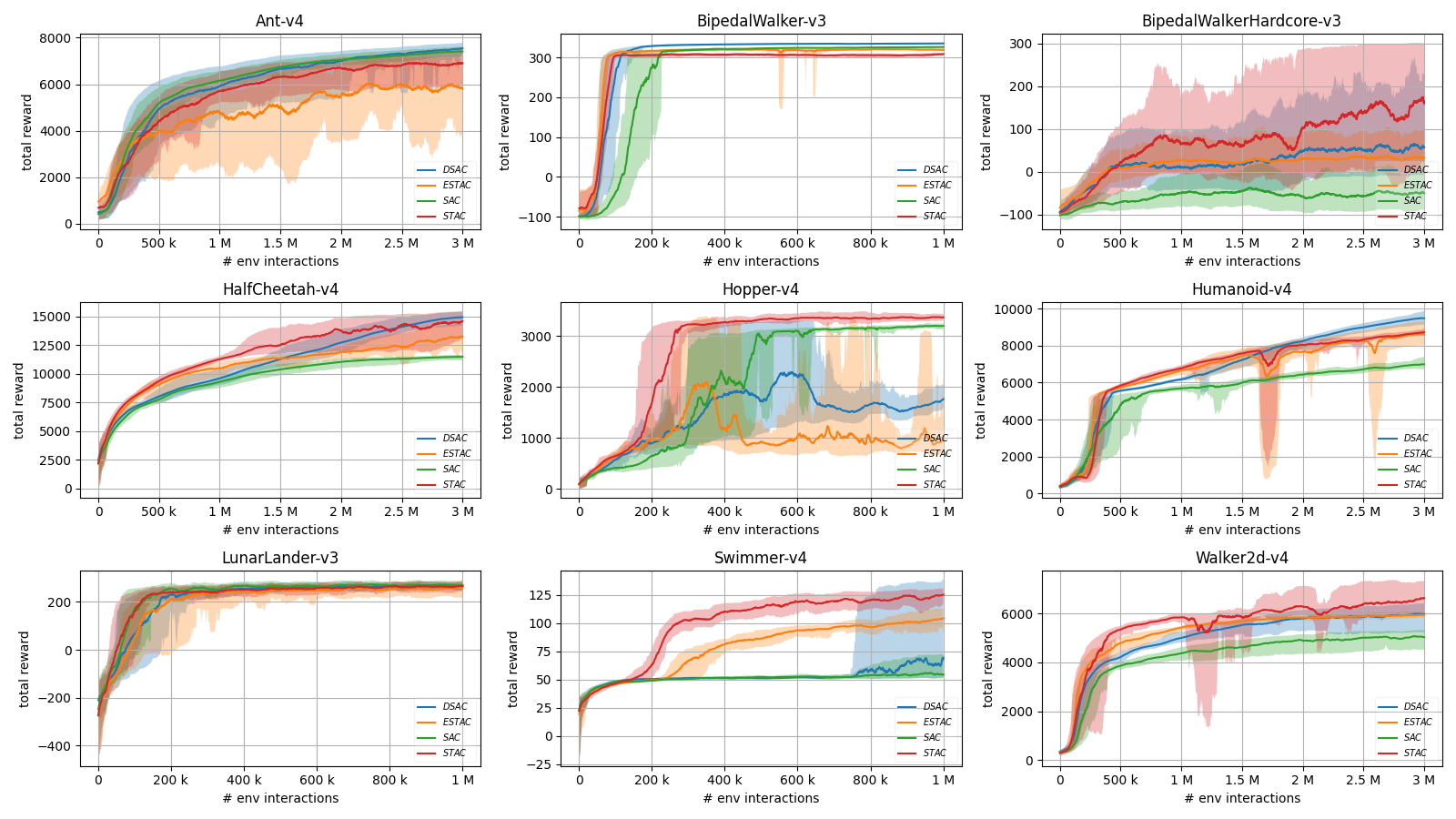}
	\caption{Episodic score curves of STAC and other algorithms. }
	\label{fig:main_comparison}
\end{figure}

\begin{figure}
	\centering
	\includegraphics[width=\textwidth,height=\textheight,keepaspectratio]{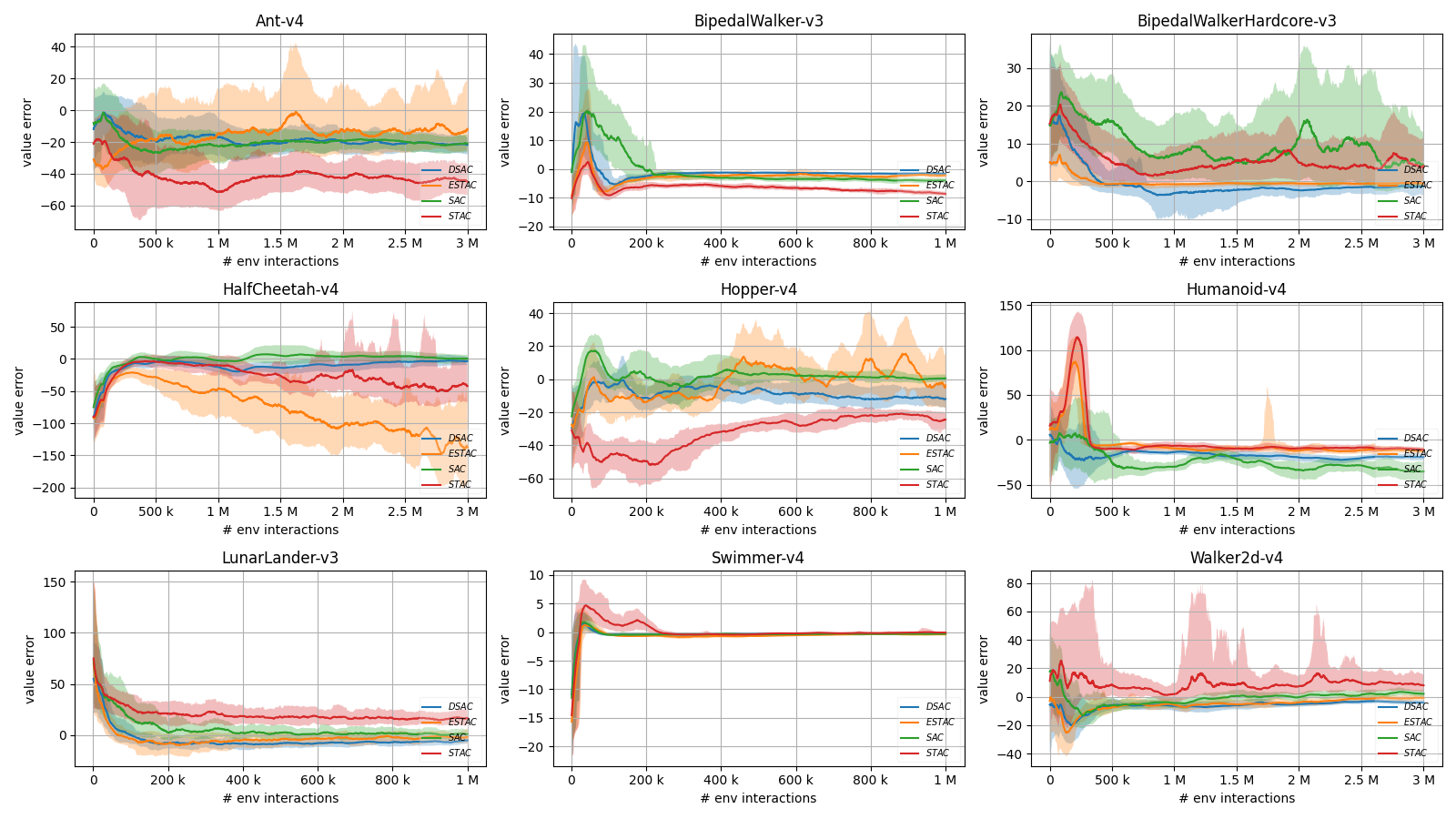}
	\caption{Average episodic value estimation error of STAC and other algorithms.}
	\label{fig:error_comparison}
\end{figure}

\paragraph{Sample efficiency} As seen from Figure \ref{fig:main_comparison} and Table \ref{tab:env_iqm_results}, STAC outperforms other algorithms on some environments in terms of sample efficiency (see \verb|BipedalWalkerHardcore-v3|, \verb|Hopper-v4|, \verb|Swimmer-v4|, \verb|Walker2d-v4|) while it fall behind in other environments. However, it is important to note that there are many factors that affect performance, such as distribution modeling method, number of critics and regularization method, network size and learning rates etc. Rather than absolute performance alone, we demonstrate that STAC achieves competitive or superior performance without convergence issues while using a single critic, whereas competing methods rely on double critics.

\paragraph{ESTAC vs STAC} Except on \verb|BipedalWalker-v3|, STAC consistently outperforms ESTAC across all environments. This indicates that temporal aleatoric pessimism is generally more effective than epistemic pessimism in the form of \textit{min-clipping}, at least in our experimental setting. 


\subsection{Sweeping Pessimism and Dropout}

STAC is experimented under varying levels of pessimism $\beta$ and dropout rates to isolate effect of pessimism and dropout individually. We used 5 pessimism level and 4 dropout configurations: no dropout, actor dropout, critic dropout and actor\&critic dropout. 

\begin{table}[h!]
	\addtolength{\tabcolsep}{1pt}
	\centering
	\caption{Inter-quartile mean scores of last \%1 of evaluation episodes, for varying $\beta$. Dropout rate is 0.01 for critic and actor. } 
	\label{tab:stac_ablation}
	\resizebox{\columnwidth}{!}{
		\begin{tabular}{|c c c c c c|}
			\hline
			Env & $\beta=0.0$ & $\beta=0.125$ & $\beta=0.25$ & $\beta=0.375$ & $\beta=0.5$ \\
			\hline
			\verb~Ant-v4~ & 5264.4 & \textbf{7082.7} & 6907.3 & 6246.4 & 4834.4 \\
			\cline{1-6}
			\verb~BipedalWalker-v3~ & 171.9 & 125.4 & 262.6 & \textbf{308.0} & 305.1 \\
			\cline{1-6}
			\verb~BipedalWalkerHardcore-v3~ & 61.8 & \textbf{118.8} & 43.4 & -0.0 & -59.8 \\
			\cline{1-6}
			\verb~HalfCheetah-v4~ & \textbf{14489.5} & 12299.6 & 13385.4 & 11042.1 & 7602.7 \\
			\cline{1-6}
			\verb~Hopper-v4~ & 1132.4 & 1308.8 & 2643.4 & 3318.2 & \textbf{3363.7} \\
			\cline{1-6}
			\verb~Humanoid-v4~ & 5623.9 & \textbf{8407.2} & 8277.9 & 7587.2 & 6930.8 \\
			\cline{1-6}
			\verb~LunarLander-v3~ & \textbf{265.7} & 263.8 & 157.2 & 14.3 & -77.7 \\
			\cline{1-6}
			\verb~Swimmer-v4~ & \textbf{103.7} & 93.5 & 70.0 & 54.2 & 45.9 \\
			\cline{1-6}
			\verb~Walker2d-v4~ & 4299.2 & \textbf{6306.4} & 5842.1 & 5795.2 & 5539.1 \\
			\cline{1-6}
			\hline
		\end{tabular}
	}
\end{table}


\paragraph{Learning curves} 

In the experiments, we presented controlled study to isolate dropout's role: while distributional pessimism eliminates overestimation bias, dropout primarily increases training stability and performance. Both learning curves are available in Appendix \ref{app:results_of_ablations}. Episodic scores and related value estimation errors for varying level of pessimism under joint actor/critic dropout are summarized in Figure \ref{fig:pess_sens_pol_cri_dropout} and Figure \ref{fig:pess_sens_error_pol_cri_dropout}.  In addition, same curves for different dropout configurations under same level of pessimism are summarized in Figure \ref{fig:drop_sens} and Figure \ref{fig:drop_sens_error}, where $\beta$ values are in Table \ref{tab:target_ent_pessimism_algos}. The shaded area represents quartile limits, while the solid line represents inter-quartile mean across different seeds and \%1 smoothing window of evaluation episodes through environment steps. 

\paragraph{Pessimism} Table \ref{tab:stac_ablation} summarizes final performance in terms of inter-quartile mean score on last \%1 evaluation episodes. Results indicate that $\beta$ is a sensitive parameter, the higher $\beta$ yields a negative error (see Figure \ref{fig:pess_sens_error_pol_cri_dropout}), consistent with the intended effect. In addition, score curves are worse if value estimations tend to be positive, meaning that critic overestimation is not mitigated enough (see \verb|Ant-v4|, \verb|BipedalWalker-v3|, \verb|Hopper-v4|, \verb|Humanoid-v4|, \verb|Walker-v4|). On the other hand, score curves are again worse when error curves are negative than it should be, meaning that the agent is stuck on critic underestimation caused by high pessimism (see \verb|BipedalWalkerHardcore-v3|, \verb|HalfCheetah-v4|, \verb|LunarLander-v3|, \verb|Swimmer-v4|). In the end, this sensitivity varies for different environments, possibly due to differences in reward sparsity and stochasticity. This parameter stands as the major bottleneck of STAC and can only be determined by this heuristic for now. 

\paragraph{Dropout}
Dropout's primary role in STAC is as a regularization tool to improve optimization stability and robustness to noisy data. In Figure \ref{fig:drop_sens}, independent of critic dropout, actor dropout consistently improves performance by stabilizing policy updates. According to Figure \ref{fig:drop_sens_error}, the overestimation trends observed with and without dropout are qualitatively similar, confirming that our core contribution: (mitigating overestimation via aleatoric uncertainty) remains effective independent of dropout, except \verb|Hopper-v4|. However, in this environment, learning does not even converge without critic dropout. Therefore, it is necessary in some environments for training stability rather than overestimation mitigation. Double/ensemble critic approaches mitigate training instability naturally, whereas STAC requires a convenient dropout in some cases. 

\paragraph{Pessimism under Environment Stochasticity}

Pessimistic learning based on aleatoric uncertainty can interact with exploration, especially in environments with complex dynamics. Comparing \verb|BipedalWalker-v3| and \verb|BipedalWalkerHardcore-v3|, we observe that lower pessimism performs better in the harder environment, while moderate pessimism improves performance in the simpler setting (see Figure \ref{fig:pess_sens_pol_cri_dropout} and \ref{fig:pess_sens_error_pol_cri_dropout}). One possible explanation is that, in simpler tasks, uncertainty is dominated by approximation and bootstrapping effects, where pessimism helps reduce overestimation. In contrast, in more challenging environments, uncertainty may be driven more strongly by the environment itself, and excessive pessimism may slow learning by preventing exploration. These observations suggest that the appropriate level of pessimism depends on the interaction between task complexity and learning dynamics.

\subsection{Risk Sensitivity by Pessimism}

In STAC, actor updates are pessimistic for one-step return uncertainty. However, critic targets are also pessimistic with respect to consecutive steps. Therefore, pessimism is also bootstrapped along with reward, and increasing risk-averse behavior is expected as $\beta$ increases. For this purpose, a toy problem is used to demonstrate this behavior. 

\begin{figure}
	\centering
	\includegraphics[width=\textwidth,height=\textheight,keepaspectratio]{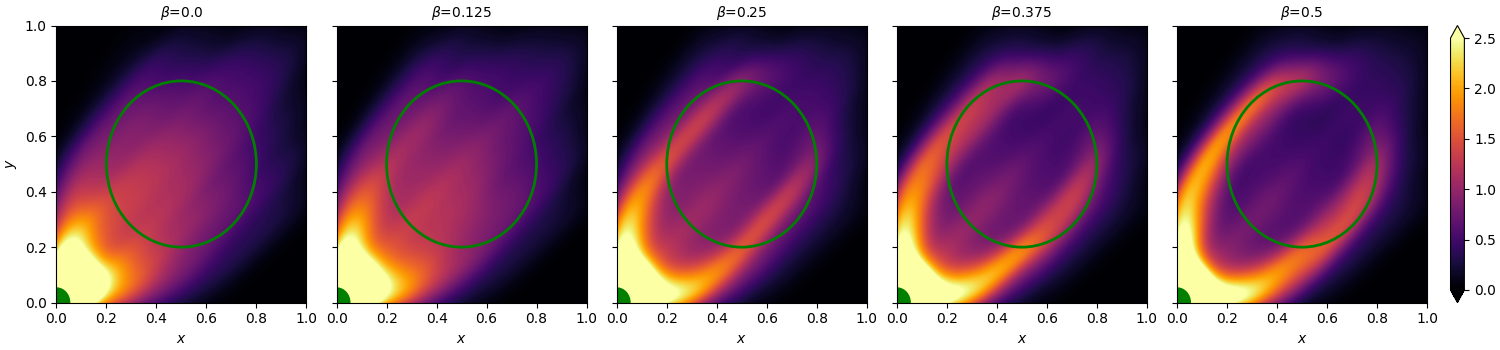}
	\caption{Position occurrence density heatmap of STAC on RiskyPointMass-v0 for different $\beta$ values. The green circle on the middle represents boundary of danger zone. Episodes are terminated when point mass enters green area on the lower left. }
	\label{fig:pointmass_heatmap}
\end{figure}

\paragraph{Environment Details} Following \citet{ma2021conservative} and \citet{ma2025dsac}, risky navigation task is used as a simple stochastic problem. The environment is named as \verb~RiskyPointMass-v0~. The goal of this task is to navigate a point mass towards a target point while avoiding a danger zone, in a two dimensional space. Danger zone is a circular area with a radius of $r_0 = 0.3$, centered at point $(0.5, 0.5)$. Initial states are randomly selected from $\mathcal{U}(0.3, 1)$ for both coordinates, excluding the danger zone. Episode terminates if the agent is close to target point $(0, 0)$ by Euclidean distance of $0.05$. At each step, the agent receives the sum of three reward components: negative Euclidean distance to the target, fixed $-0.1$ points to promote episode termination, a penalty of $-10$ points if the agent is within danger zone with probability $0.1\exp(-4 r^2/r_0^2)$ where $r$ is the distance to danger zone center. The agent can move by $0.1$ unit at most in both dimensions plus a random slip (at most $0.02$ unit). The reward function is designed so that optimal risk-neutral solution is to navigate linearly to the target point whereas optimal risk-averse solution is to navigate around the danger zone circle. 

\paragraph{Results} For each $\beta$ value, STAC is run with a single fixed seed, as this experiment is intended to provide a qualitative illustration of risk-sensitive behavior rather than a performance comparison. In Figure \ref{fig:pointmass_heatmap}, for varying $\beta$ values, position densities of the trained agents are demonstrated. The bigger green circles represent danger zone boundaries, whereas quarter green circles on the lower left represent the target points. For this demonstration, 500 evaluation steps are run with random initial points. Clearly, with higher $\beta$ values, the agent prefers to go around the circle as opposed to agents trained with lower $\beta$. This is an expected effect since pessimism is bootstrapped through TD targets, effectively inducing conservative behavior over the roll-out. To summarize, STAC not only mitigates overestimation but also induces risk-averse learning with pessimism on temporal uncertainty. 

\section{Related Work}

\paragraph{Overestimation reduction by epistemic uncertainty} In the literature, overestimation problem is solved by pessimistic learning based on \textit{epistemic uncertainty} of critic \citep{chen2021randomized, hiraoka2021dropout}. The same approach is also used in model-based RL methods \citep{janner2019trust, chua2018deep, depeweg2016learning}. Recently, for off-policy model-free actor-critic algorithms, epistemic uncertainty is estimated by using double networks \citep{fujimoto2018addressing, haarnoja2018soft}, or ensemble networks \citep{chen2021randomized, moskovitz2021tactical}, or dropout \citep{hiraoka2021dropout}. Ensemble of critics are computationally expensive as there are multiple of parameters to be optimized. The mentioned methods use a constant level of pessimism for policy evaluation and improvement, while \citet{moskovitz2021tactical} focused on updating pessimism \textit{on the fly} as a bandit problem rather than fixing it. Similarly \citet{cetin2023learning} tunes pessimism online by treating it as a dual variable  by enforcing the expected off-policy action-value bias to be zero. 

\paragraph{Overestimation reduction by aleatoric uncertainty}  
\citet{kuznetsov2020controlling} claims \textit{aleatoric uncertainty} is also responsible for overestimation since any randomness is exploited when the Bellman optimality operator ($\mathcal{T}^*$) is employed. For this purpose, they use ensemble networks for epistemic uncertainty, in which each network is a distributional network (modeled as quantiles) for aleatoric uncertainty modeling, and used both types of uncertainties for overestimation correction. Similarly, STAC uses distributional critic, but only models temporal (one-step) aleatoric uncertainty rather than full return distribution as this is the main source of overestimation, without epistemic modeling. 

\paragraph{Risk-sensitive reinforcement learning}  
Aleatoric uncertainty representation for the value function carries fundamental importance, especially in the presence of approximation \citep{bellemare2017distributional}. This can be conducted by atoms \citep{bellemare2017distributional}, quantiles \citep{dabney2018distributional, ma2025dsac}, or a normal distribution \citep{tang2019worst, yang2021wcsac} to model cumulative return behavior. For safety-critical RL applications to avoid catastrophic situations, aleatoric pessimism is used \citep{tang2019worst,yang2021wcsac,lim2022distributional,greenberg2022efficient,ma2025dsac}. These methods use pessimism only on actor update to scale risk sensitivity by modeling full return uncertainty, whereas STAC scales both actor and critic by pessimistic updates on temporal return uncertainty for overestimation mitigation. 

\section{Conclusion \& Future Directions}

In this paper, we introduced Stochastic Actor-Critic (STAC), a novel off-policy actor-critic algorithm. The main idea is to mitigate overestimation for the sake of faster and more robust learning by incorporating the pessimistic learning objective using temporal aleatoric critic uncertainty. For this purpose, the critic is modeled as a distributional neural network. Although normal distribution is used in STAC, our analysis is valid for all sub-Gaussian critic representations. 

We derived an upper bound for overestimation, demonstrating that an adequate level of pessimism mitigates overestimation without succumbing to underestimation, thus facilitating computational and sample-efficient learning. As a by-product, this approach also yields risk-averse learning process. Lastly, dropout on the actor and critic networks is proposed to mitigate overfitting. Actor dropout mostly improves performance, whereas using dropout on critic is only required to ensure learning convergence. 

\paragraph{Adaptive Pessimism and Regularization} While STAC demonstrates that temporal aleatoric pessimism alone is sufficient to mitigate overestimation using a single critic, several directions remain open. In particular, the pessimism coefficient $\beta$ is currently selected empirically and its optimal value varies across environments, motivating future work on adaptive or principled tuning strategies, similar to \citet{moskovitz2021tactical, cetin2023learning}. In addition, although STAC reduces computational overhead by avoiding double critics, a single-critic setup may require explicit regularization (e.g., dropout) for stable optimization in some environments. For future work, alternative regularization approaches might be considered for more stable training, including ensembles. 

\paragraph{Distribution Modeling} The effect of pessimism on highly stochastic environments is also an important topic for research. As shown, pessimistic updates lead to risk-averse behavior. Our results are limited to simple stochastic tasks, and it is worth investigating STAC empirically on more difficult ones. For more advanced modeling, aleatoric critic uncertainty can be modeled by quantiles \citep{dabney2018distributional, ma2025dsac} or flow matching \citep{chen2025unleashing, dong2025value}, rather than normal distribution. 

\paragraph{Exploration} Lastly, current methods use epistemic critic uncertainty for overestimation problem, which contradicts with \textit{optimism in the face of uncertainty} principle. Future researches might consider using epistemic critic and actor uncertainty for exploration, retaining temporal aleatoric pessimism. 

\paragraph{Broader Impact} STAC is an important contribution to understand overestimation phenomenon in off-policy actor-critic learning. It demonstrates that critic overestimation is also a topic of distributional learning, and the solution is to devise a risk-averse objective for both actor and critic. Moreover, it also proves that modeling epistemic uncertainty is not necessary at all for this problem, improving computational efficiency directly. 

\subsubsection*{Acknowledgments}
This research has received no external funding. 


\bibliographystyle{unsrtnat}
\bibliography{references}  

\newpage 

\begin{appendices}
	
	\section{Proofs}
	\label{app:proofs}
	
	\begin{proof}[Proof of Theorem \ref{thm:overestimation}]
		Analyzing Bellman update $\mathcal{T}^{*} Q(s,a)$, 
		\begin{align*}
			\mathcal{T}^{*} Q(s,a) &= R(s,a) + \gamma \mathbb{E}_{Q\sim \mathcal{Q}} \Big[ \Tilde{\alpha} \log \Big( \int\limits_{\mathcal{A}} \exp({\Tilde{\alpha}}^{-1} Q(s',a')) da' \Big) \Big] \\
			&\leq R(s,a) + \gamma \Tilde{\alpha} \log \Big( \mathbb{E}_{Q\sim \mathcal{Q}} \Big[ \int\limits_{\mathcal{A}} \exp({\Tilde{\alpha}}^{-1} Q(s',a')) da'  \Big] \Big) \\
			&= R(s,a) + \gamma \Tilde{\alpha} \log \Big( \int\limits_{\mathcal{A}} \mathbb{E}_{Q\sim \mathcal{Q}} \Big[ \exp({\Tilde{\alpha}}^{-1} Q(s',a')) \Big] da' \Big) \\
			&\leq R(s,a) + \gamma \Tilde{\alpha} \log \Big( \int\limits_{\mathcal{A}} \exp( {\Tilde{\alpha}}^{-1} \mu(s',a') + \frac{1}{2} {\Tilde{\alpha}}^{-2} \sigma^2(s',a') ) da' \Big) \\
		\end{align*}
		First inequality is by Jensen's inequality (using concave property of $\log$ function) while the following equality is a result of Tonelli's theorem. The second inequality results from the sub-Gaussian assumption \ref{defn:subgauss}. 
	\end{proof} 
	
	\begin{proof}[Proof of Corollary \ref{cor:pessimism}]
		From the Theorem \ref{thm:overestimation}, we can show that 
		\begin{align*}
			\mathcal{T}_{\beta}^{*} Q(s,a) 
			&= R(s,a) + \gamma \mathbb{E}_{Q\sim \mathcal{Q}} \Big[ \Tilde{\alpha} \log \Big( \int\limits_{\mathcal{A}} \exp({\Tilde{\alpha}}^{-1} Q(s',a') - \beta\sigma(s',a')) da' \Big) \Big] \\
			&\leq R(s,a) + \gamma \Tilde{\alpha} \log \Big( \int\limits_{\mathcal{A}} \exp( {\Tilde{\alpha}}^{-1} (\mu(s',a') - \beta \sigma(s',a') + \frac{1}{2} {\Tilde{\alpha}}^{-1} \sigma^2(s',a')) ) da' \Big) \\
			&=  R(s,a) + \gamma \Tilde{\alpha} \log \Big( \int\limits_{\mathcal{A}} \exp( {\Tilde{\alpha}}^{-1} \mu^{\dagger}(s',a')) da' \Big) = \mathcal{T}^{*} \mu^{\dagger}(s,a). 
		\end{align*}
		where we have defined $\mu^{\dagger}(s',a') = \mu(s',a') - \beta \sigma(s',a') + \frac{1}{2} {\Tilde{\alpha}}^{-1} \sigma^2(s',a') $. If $\beta \geq \max\limits_{(s', a')} \frac{1}{2}{\Tilde{\alpha}}^{-1} \sigma(s', a')$, then $\mu^{\dagger}(s',a') < \mu(s',a')$. So we can show that
		\begin{equation}
			\mathcal{T}_{\beta}^{*} Q(s,a) \leq \mathcal{T}^{*} \mu^{\dagger}(s,a) \leq \mathcal{T}^{*} \mu(s,a).  
		\end{equation}
	\end{proof}
	
	\begin{proof}[Proof of Theorem \ref{thm:overestimation_bound}]
		From the Theorem \ref{thm:overestimation}, we can show that
		\begin{align*}
			\mathbb{E}_{s'\sim\tau} [ \mathcal{T}^{*} Q(s,a) ] &\leq R(s,a) + \gamma \mathbb{E}_{s'\sim\tau } \Big[ \Tilde{\alpha} \log \Big( \int\limits_{\mathcal{A}} \exp( {\Tilde{\alpha}}^{-1} \mu(s',a') + \frac{1}{2} {\Tilde{\alpha}}^{-2} \sigma^2(s',a') ) da' \Big) \Big] \\
			&\leq R(s,a) + \gamma \mathbb{E}_{s'\sim\tau } \Big[ \Tilde{\alpha} \log \Big( \big( \int\limits_{\mathcal{A}} \exp( {\Tilde{\alpha}}^{-1} \mu(s',a') ) da' \big) \cdot \big( \max_{a'} \exp( \frac{1}{2} {\Tilde{\alpha}}^{-2} \sigma^2(s',a') \big) \Big) \Big] \\  
			&= R(s,a) + \gamma \mathbb{E}_{s'\sim\tau } \Big[ \Tilde{\alpha} \log \Big( \int\limits_{\mathcal{A}} \exp( {\Tilde{\alpha}}^{-1} \mu(s',a') ) da' \Big) \Big] + \frac{\gamma}{2\Tilde{\alpha}} \mathbb{E}_{s'\sim\tau } \Big[ \max_{a'} \sigma^2(s',a') \Big]. \\ 
		\end{align*}
		
		The second inequality is a result of the mean value theorem for integrals. In the last equality, the first two terms are equal to $\mathcal{T}^{*} \mu(s,a)$. Therefore, 
		\begin{align*}
			\epsilon(s, a) = \mathbb{E}_{s'\sim\tau} [ \mathcal{T}^{*} Q(s,a) ] - \mathcal{T}^{*} \mu(s,a) \leq \frac{\gamma}{2\Tilde{\alpha}} \mathbb{E}_{s'\sim\tau} \Big[ \max_{a'} \sigma^2(s', a') \Big]. 
		\end{align*} 
	\end{proof}
	
	\newpage
	
	\section{Results of Ablation Studies}
	\label{app:results_of_ablations}
	
	\subsection{Pessimism Ablation}
	\label{app:pessimism_ablation}
	
	\begin{figure}[H]
		\centering
		\includegraphics[width=0.95\textwidth,height=\textheight,keepaspectratio]{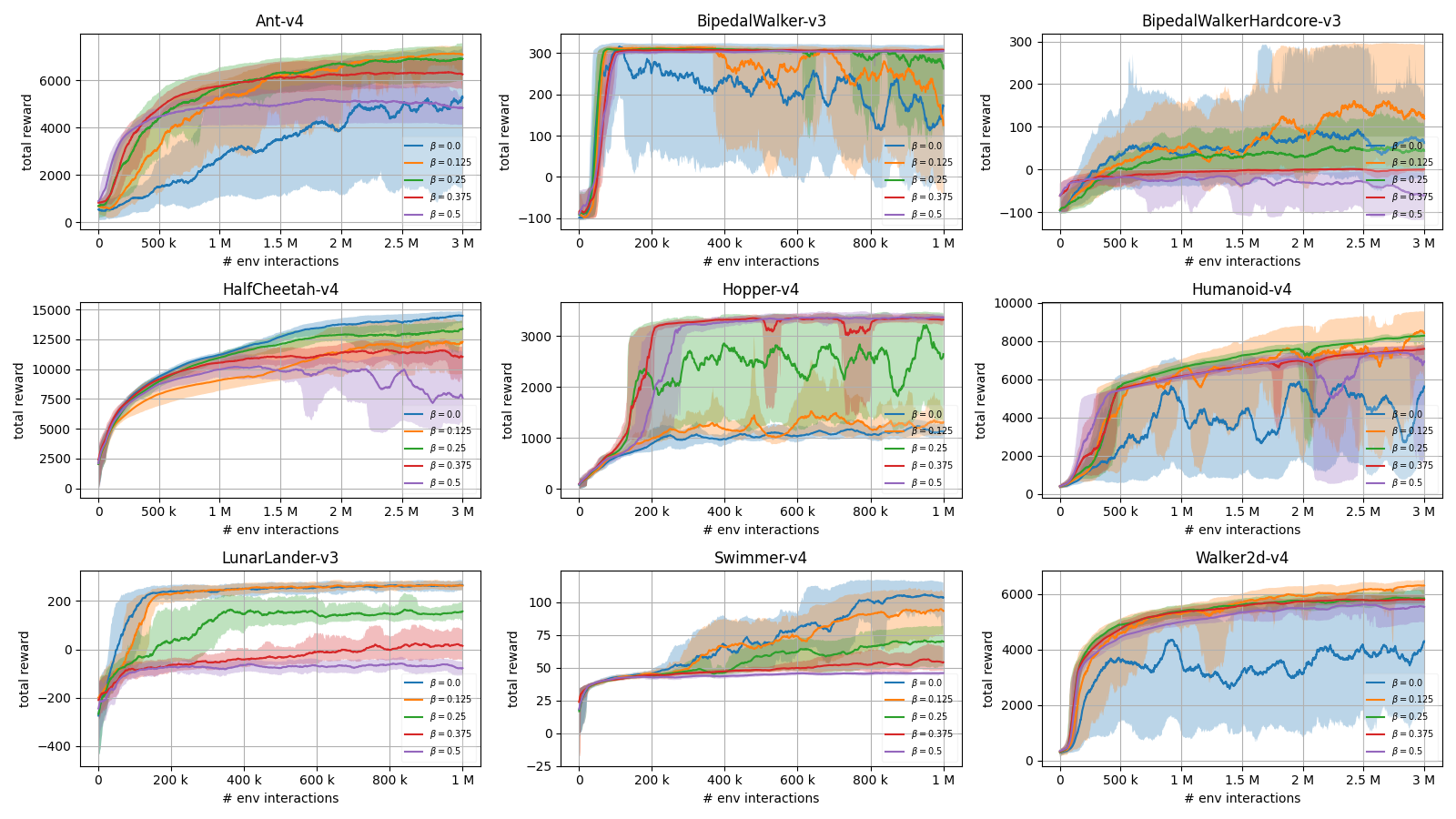}
		\caption{Episodic score curves of STAC with varying pessimism ($\beta$) parameter, with actor and critic dropout equal to $0.01$. }
		\label{fig:pess_sens_pol_cri_dropout}
	\end{figure}
	
	\begin{figure}[H]
		\centering
		\includegraphics[width=0.95\textwidth,height=\textheight,keepaspectratio]{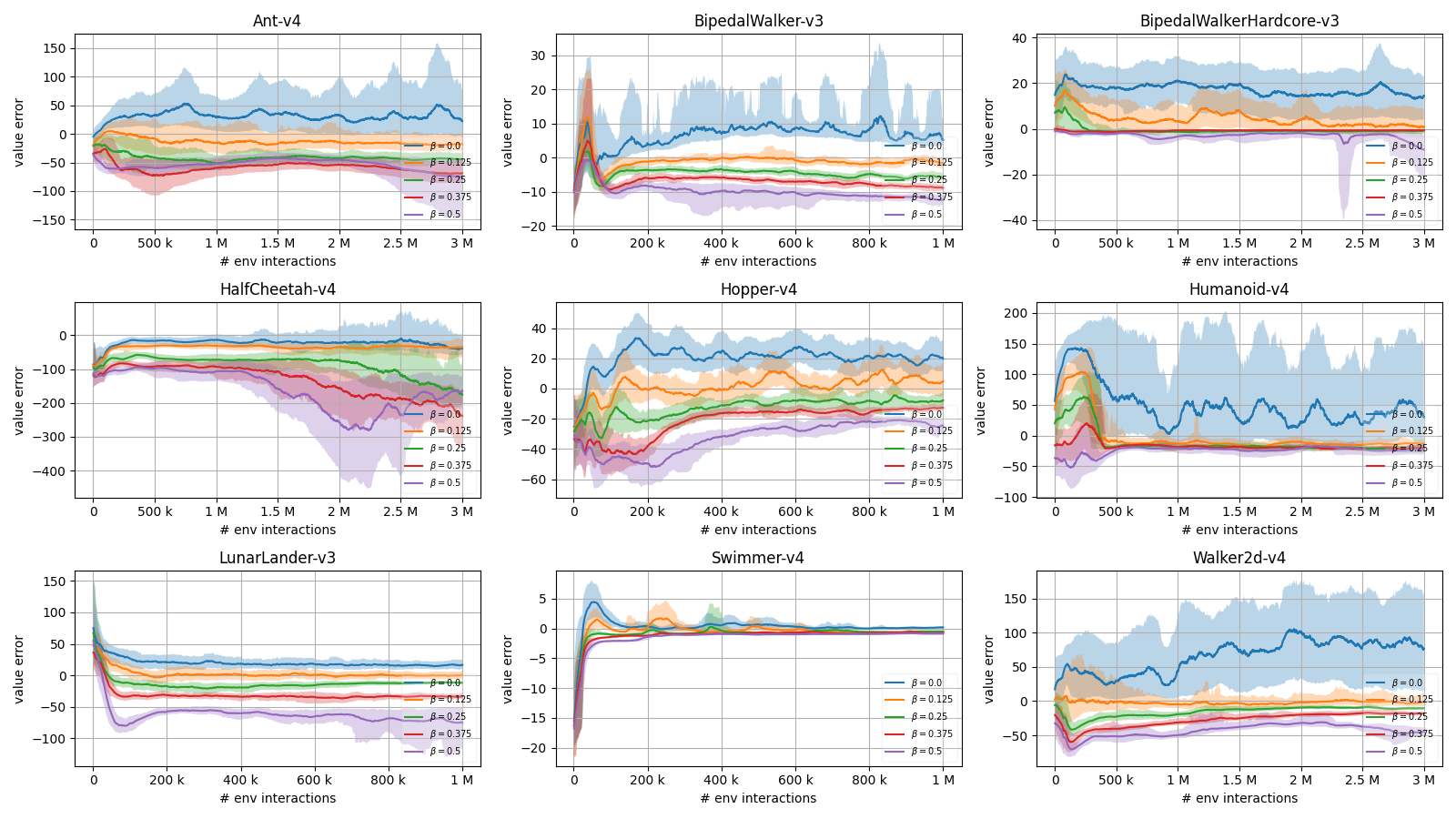}
		\caption{Average episodic value estimation error curves of STAC with varying pessimism ($\beta$) parameter, with actor and critic dropout equal to $0.01$. }
		\label{fig:pess_sens_error_pol_cri_dropout}
	\end{figure}	
	
	\subsection{Dropout Configuration Ablation}
	\label{app:dropout_ablation}
	
	\begin{figure}[H]
		\centering
		\includegraphics[width=0.95\textwidth,height=\textheight,keepaspectratio]{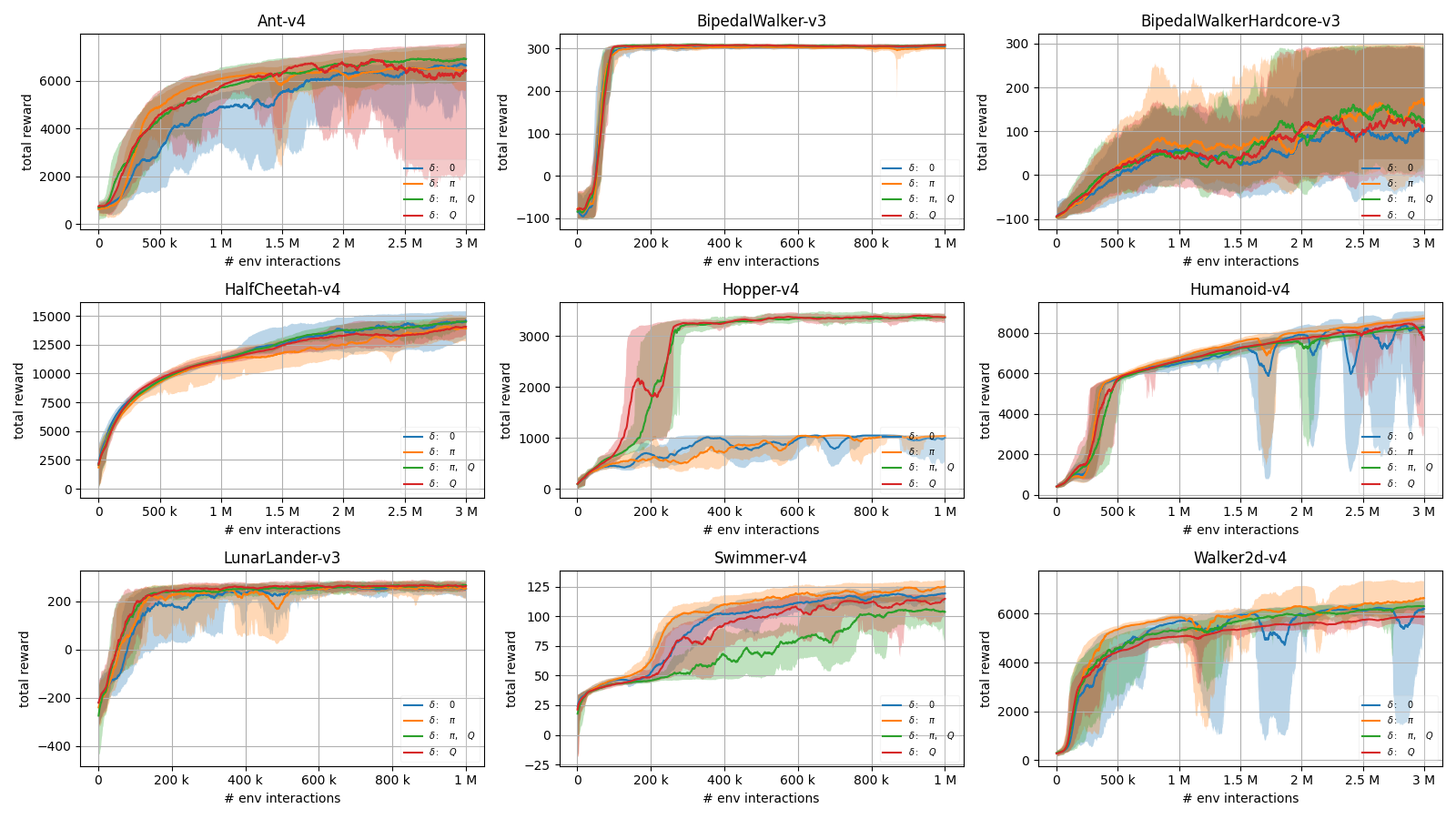}
		\caption{Episodic score curves of STAC with 4 different dropout configurations, under fixed pessimism level yielding best performance (see Table \ref{tab:target_ent_pessimism_algos}). }
		\label{fig:drop_sens}
	\end{figure}
	
	\begin{figure}[H]
		\centering
		\includegraphics[width=0.95\textwidth,height=\textheight,keepaspectratio]{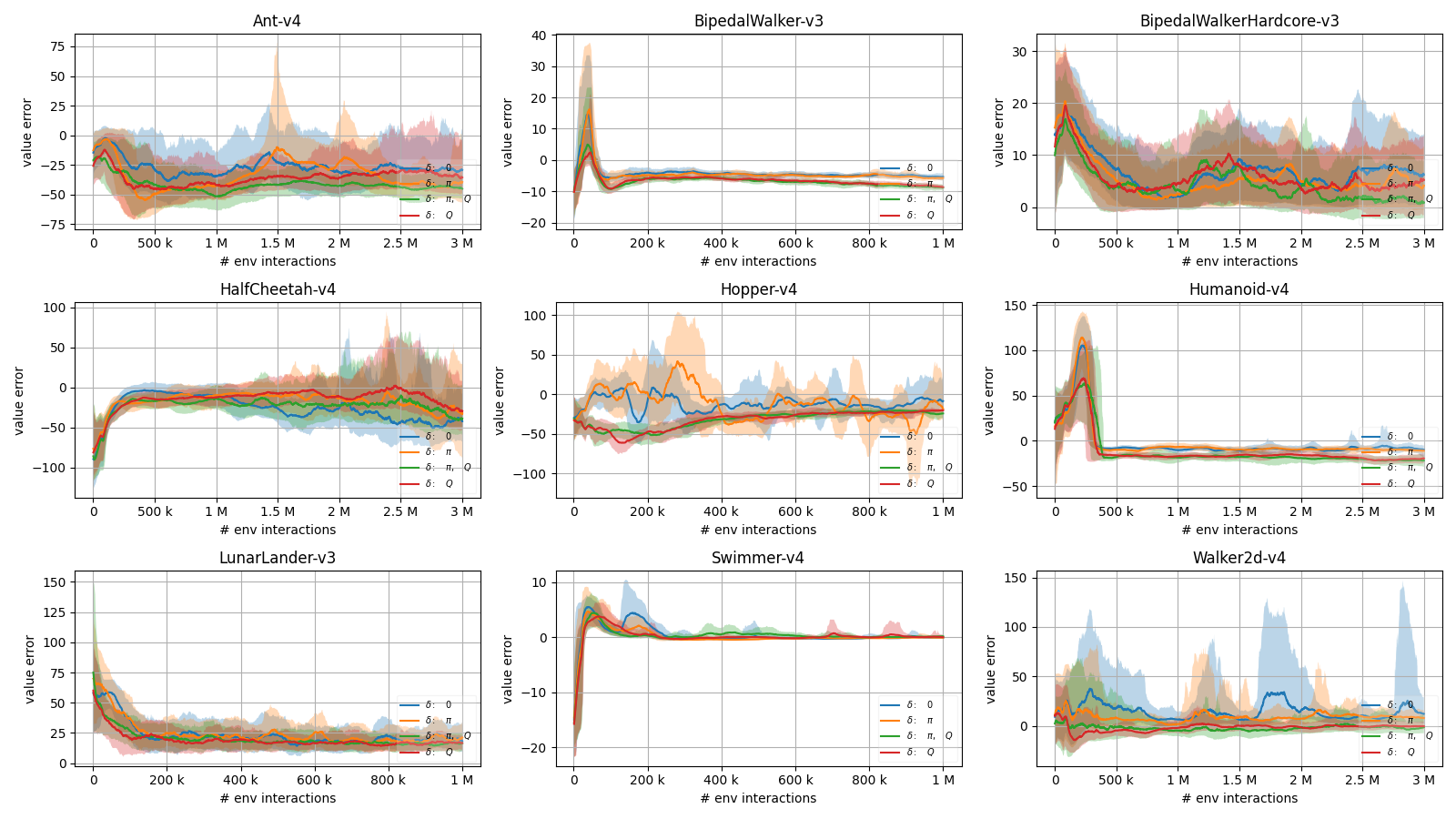}
		\caption{Average episodic value estimation error curves of STAC with 4 different dropout configurations, under fixed pessimism level yielding best performance (see Table \ref{tab:target_ent_pessimism_algos}). }
		\label{fig:drop_sens_error}
	\end{figure}

	\section{Hyper-parameters and Experiment Details}
	\label{app:hyperparam_exp}
	
	
	\begin{table}[H]
		\centering
		\caption{Experimental Parameters per Algorithm}
		\label{tab:parameters_algos}
		
		\begin{tabular}{@{}lll@{}}
			\toprule
			\textbf{Algorithm}             & \textbf{Parameter}      & \textbf{Value}          \\ \midrule
			\multirow{11}{*}{DSAC, ESTAC, SAC, STAC}    & Optimizer               & Adam (\citep{kingma2014adam}) \\
			& Critic Learning Rate    & $3 \times 10^{-4}$      \\
			& Actor Learning Rate     & $3 \times 10^{-4}$      \\
			& Discount Rate ($\gamma$)& 0.99                    \\
			& Target-Smoothing Coefficient ($\rho$) & 0.995       \\
			& Target Entropy ($\bar{H}$) & $-|\mathcal{A}|$       \\			
			& Replay Buffer Size      & $1 \times 10^{6}$       \\
			& Mini-Batch Size         & 256                     \\
			& Learning Starting Step    & 10000                   \\
			& UTD Ratio ($G$)         & 1                        \\
			\midrule
			\multirow{2}{*}{DSAC}     & Number of Quantiles     & 25 \\
			& Huber Loss Threshold ($\kappa$)           & 1.0                     \\
			\bottomrule
		\end{tabular}
	\end{table}
	
	\begin{table}[H]
		\centering
		\caption{Target policy entropy ($\bar{H}$), pessimism $\beta$ and dropout rates per environment, yielding best results. All algorithms share same policy entropy. }
		\label{tab:target_ent_pessimism_algos}
		\begin{tabular}{@{}lllll@{}}
			\toprule
			\textbf{Environment}   & \textbf{Entropy ($\bar{H}$)} & \textbf{Pessimism ($\beta$)} & \textbf{Actor Dropout} & \textbf{Critic Dropout} \\ \midrule
			\verb~Ant-v4~                      & -4        & 0.25      & 0.01      &  0.01     \\
			\verb~BipedalWalker-v3~            & -2        & 0.375     & 0         &  0.01     \\
			\verb~BipedalWalkerHardcore-v3~    & -2        & 0.125     & 0.01      &  0    \\						
			\verb~HalfCheetah-v4~              & -3        & 0.0       & 0         &  0      \\			
			\verb~Hopper-v4~                   & -1        & 0.5       & 0.01      &  0.01     \\
			\verb~Humanoid-v4~                 & -8        & 0.25      & 0.01      &  0    \\
			\verb~LunarLander-v3~   		   & -2        & 0.0       & 0.01      &  0.01     \\ 
			\verb~Swimmer-v4~                  & -1        & 0.0       & 0.01      &  0     \\	
			\verb~Walker2d-v4~                 & -3        & 0.125     & 0.01      &  0     \\								
			\bottomrule
		\end{tabular}
	\end{table}	
	
	\section{Network Architectures of STAC}
	\label{app:network_archs}
	
	\begin{figure}[H]
		\centering
		\begin{minipage}{.5\textwidth}
			\centering
			
			\begin{tikzpicture}[node distance=0.2cm, baseline]
				
				\node[inplayer] (input) {Input ($d_s + d_a$): $(s,a)$};
				
				\node[layer,below=of input] (fc1) {Linear (256)};
				\node[dropout,below=0.2cm of fc1] (dropout1) {Dropout};
				\node[norm,below=0.2cm of dropout1] (norm1) {LayerNorm};
				\node[activation,below=0.2cm of norm1] (relu1) {ReLU};
				
				\node[layer,below=of relu1] (fc2) {Linear (256)};
				\node[dropout,below=0.2cm of fc2] (dropout2) {Dropout};
				\node[norm,below=0.2cm of dropout2] (norm2) {LayerNorm};
				\node[activation,below=0.2cm of norm2] (relu2) {ReLU};
				
				\node[layer,below=of relu2] (output) {Linear (2)};
				\node[output,below=0.2cm of output] (gaussian) {$\mathcal{N} (\mu(1), \sigma^2(1))$};

				\draw[->] (input) -- (fc1);
				\draw[->] (fc1) -- (dropout1);
				\draw[->] (dropout1) -- (norm1);
				\draw[->] (norm1) -- (relu1);
				\draw[->] (relu1) -- (fc2);
				\draw[->] (fc2) -- (dropout2);
				\draw[->] (dropout2) -- (norm2);
				\draw[->] (norm2) -- (relu2);
				\draw[->] (relu2) -- (output);
				\draw[->] (output) -- (gaussian);
				
				
			\end{tikzpicture}
			
			\captionof{figure}{Critic network architecture}
			\label{fig:critic_visual}
		\end{minipage}%
		\begin{minipage}{.5\textwidth}
			\centering
			
			\begin{tikzpicture}[node distance=0.2cm]
				
				\node[inplayer] (input) {Input ($d_s$): $s$};
				
				\node[layer,below=of input] (fc1) {Linear (256)};
				\node[dropout,below=0.2cm of fc1] (dropout1) {Dropout};
				\node[norm,below=0.2cm of dropout1] (norm1) {LayerNorm};
				\node[activation,below=0.2cm of norm1] (relu1) {ReLU};
				
				\node[layer,below=of relu1] (fc2) {Linear (256)};
				\node[dropout,below=0.2cm of fc2] (dropout2) {Dropout};
				\node[norm,below=0.2cm of dropout2] (norm2) {LayerNorm};
				\node[activation,below=0.2cm of norm2] (relu2) {ReLU};
				
				\node[layer,below=of relu2] (output) {Linear (2$d_a$)};
				\node[output,below=0.2cm of output] (squashed) {$\tanh_{\#}\mathcal{N} ( \mu(d_a), \sigma^2(d_a) )$};
				
				\draw[->] (input) -- (fc1);
				\draw[->] (fc1) -- (dropout1);
				\draw[->] (dropout1) -- (norm1);
				\draw[->] (norm1) -- (relu1);
				\draw[->] (relu1) -- (fc2);
				\draw[->] (fc2) -- (dropout2);
				\draw[->] (dropout2) -- (norm2);
				\draw[->] (norm2) -- (relu2);
				\draw[->] (relu2) -- (output);
				\draw[->] (output) -- (squashed);
				
			\end{tikzpicture}
			
			\captionof{figure}{Actor network architecture}
			\label{fig:policy_visual}
		\end{minipage}
	\end{figure}

	\section{Source Code}
	\label{app:code}
	Our results can be accessed publicly at \url{https://github.com/ugurcanozalp/stochastic-actor-critic}. This code uses our in-house developed RL framework as a sub-repository, available on \url{https://github.com/ugurcanozalp/rl-warehouse}. 
	
\end{appendices}

\end{document}